\documentclass{article}
\usepackage[utf8]{inputenc} 
\usepackage[T1]{fontenc}    
\PassOptionsToPackage{hyphens}{url} 
\usepackage[hyperfootnotes=false]{hyperref}
\usepackage{booktabs}       
\usepackage{amsfonts}     
\usepackage{nicefrac}      
\usepackage{microtype}      
\usepackage[table]{xcolor} 
\usepackage{natbib}
\usepackage{breqn}
\usepackage[hang,flushmargin]{footmisc}
\usepackage{algorithm}
\usepackage{algpseudocode}
\usepackage{graphicx}
\usepackage{enumerate}
\usepackage{caption}
\usepackage{tcolorbox}      
\usepackage{array}           
\usepackage{subcaption}
\usepackage{comment}
\usepackage[export]{adjustbox}
\usepackage{mathtools}
\usepackage{amsthm}
\usepackage{authblk}
\usepackage{fullpage}
\usepackage{enumitem}
\usepackage{seqsplit}

\usepackage{tikz}
\usetikzlibrary{positioning}
\usetikzlibrary{calc}
\usetikzlibrary{fit}
\usepackage[dvipsnames]{xcolor}
\usepackage{tcolorbox}

\usepackage{Definitions}
\usepackage{dsfont}
\hypersetup{
    colorlinks=true,
    linkcolor=blue,
    citecolor=blue,
    urlcolor=blue
}

\usepackage{color-edits}
\addauthor{tq}{purple}
\addauthor{aav}{blue}

\usepackage{amssymb}

\def\E{\mathbb{E}}

\def\P{\mathcal{P}}

\def\y{\mathbf{y}}

\def\leader{s_{(1)}}
\def\ru{s_{(2)}}

\usepackage{bbm}
\usepackage{cancel}
\usepackage{nicematrix,tikz}
\usetikzlibrary{arrows.meta, bending, positioning, shapes.geometric}
\usepackage{pgfplots}
\usepackage{pgfplotstable}
\usepgfplotslibrary{dateplot}

\pgfplotsset{compat=1.18}
\usetikzlibrary{plotmarks}
\usepgfplotslibrary{fillbetween}

\newlength{\inlineheight}
\newcommand{\UnnumberedFootnote}[1]{{\def\thefootnote{}\footnote{#1}
\addtocounter{footnote}{-1}}}

\definecolor{mycommentcolor}{RGB}{46,139,87} 
\renewcommand{\Comment}[1]{\hfill\textcolor{mycommentcolor}{\(\triangleright\) #1}}

\title{Strategic Self-Consistency}

\author{Tori Qiu$^{\ast,\dagger}$}
\author{Ander Artola Velasco$^{\ast,\S}$}
\author{Manuel~Gomez-Rodriguez$^{\S}$}

\affil{$^{\dagger}$Carnegie Mellon University, Pittsburgh, USA \\ toriq@andrew.cmu.edu}

\affil{$^{\S}$Max Planck Institute for Software Systems, Kaiserslautern, Germany \\
\{avelasco, manuel\}@mpi-sws.org}

\date{}

\begin{document}

\maketitle

\UnnumberedFootnote{$^{\ast}$Equal contribution. Tori Qiu contributed to this work during an internship at the Max Planck Institute for Software Systems.}

\begin{abstract}
Self-consistency has become a popular technique for enhancing the reasoning abilities of large language models by generating multiple reasoning paths and selecting the final answer through a majority vote.
%
However, because model providers typically charge users in proportion to the number of reasoning paths generated, they have a financial incentive to artificially increase the path count.
%
In this work, we show that an unfaithful provider can exploit this incentive using a simple, efficient algorithm while avoiding detection by an auditor: by generating and strategically reordering additional reasoning paths, the algorithm makes every path appear necessary to reach the majority.
%
To validate our algorithm, we conduct experiments with multiple instruct models from the \texttt{Llama} and \texttt{Qwen} families, as well as reasoning models distilled from \texttt{DeepSeek-R1}, on benchmark datasets spanning mathematics, science, and question answering. 
Our results suggest that the distribution of additional reasoning paths generated by our algorithm is heavy-tailed and that substantial capacity to overcharge remains even under the best possible audit designed to keep the false-positive rate below $\alpha = 0.1$.
\end{abstract}

\section{Introduction}
\label{sec:introduction}
State-of-the-art large language models (LLMs) are commonly accessed as a service. A user sends a query to an LLM provider, receives an answer, and pays proportionally to the compute the provider reports was necessary to generate the answer using the model it serves, as measured by the total number of consumed tokens.
%
However, because the user cannot typically verify that the reported compute was actually necessary, the provider has a financial incentive to inflate or waste compute.

The growing popularity of test-time compute methods, which improve answer quality by spending additional compute at inference time~\citep{wei2022chain,yao2023treeofthoughts,chow2024inference, koh2025tree, yoshiyama2026testtime}, sharpens this incentive. 
Under self-consistency~\citep{wang2023selfconsistency}, one of the most widely used test-time compute methods, the provider generates multiple reasoning paths using the model it serves and selects the final answer through a majority vote. 
In this context, an unfaithful provider can artificially increase the number of reasoning paths and claim that all of them were necessary to determine the answer with high confidence.

One tempting solution is to require the provider both to commit to an adaptive stopping rule that determines the number of reasoning paths required to identify the answer to a user query through a majority vote~\citep{aggarwal2023samplestepbystep, feng2026optimal, huang2026optimal} and to disclose  the entire generated sequence of reasoning paths to the user.
In this work, we show that such a solution fails, even when the reported reasoning paths can be inspected by a powerful third-party auditor.
More specifically, we make the following contributions:
\begin{enumerate}
    \item We introduce a simple and efficient algorithm that an unfaithful provider can use to artificially inflate the number of reasoning paths it generates while making each additional path appear necessary to reach the majority under the provider's stopping rule.
    \item We prove that, under natural conditions on the provider's stopping rule, our algorithm can generate a nontrivial number of additional reasoning paths, and we derive a lower bound on the expected number of additional generations.
    \item We show that an unfaithful provider running our algorithm can strategically evade an exact likelihood-ratio audit, highlighting the vulnerability of users in the current pay-for-compute market.
\end{enumerate}
To complement our theoretical results, we conduct experiments with models from the \texttt{Llama} and \texttt{Qwen} families on mathematics, science, and question-answering benchmarks. 
The results suggest that the distribution of additional reasoning paths generated by our algorithm is heavy-tailed and, even under the best possible audit designed to keep the false positive rate below $\alpha = 0.1$, the provider's capacity to overcharge remains.\footnote{In the main text, we focus on self-consistency. However, in Appendices~\ref{app:beyond-majority} and~\ref{appx:subsec:bon_results}, we empirically extend the analysis to best-of-$N$ and discuss extensions to sequential search-and-verification procedures.} Code for reproducing our experiments is available at \url{https://github.com/Human-Centric-Machine-Learning/strategic-self-consistency}.

\xhdr{Further related work}
Our work builds on a large literature on test-time compute methods~\citep{wang2023selfconsistency,koh2025tree,bi2025forestofthought,karan2026reasoning,wan2025beacon,wu2026when,ramji2026thinking,komiyama2026bestofinfinity,yoshiyama2026testtime}. Most closely related is a line of work on adaptive self-consistency~\citep{aggarwal2023samplestepbystep,feng2026optimal, huang2026optimal}, which develops adaptive stopping rules with statistical guarantees on the final answer. 
In contrast, we demonstrate that an unfaithful provider can create the appearance of following these adaptive stopping rules while artificially inflating the number of reasoning paths it generates.

Our work also connects to a rapidly growing literature on the economic incentives of LLM providers in the market of LLM-as-a-service~\citep{raghavan2024competition,mahmood2024pricing,qiu2025modeling, Bergemann25, LauferFine,olmedo2026computational,chen2026leaderboard, velasco2026overcharging,velasco2026ttcgames,velasco2026auditing}. Within this literature, the works most closely related to consider how unfaithful providers may (i) covertly substitute a  cheaper, lower-quality LLM for the one they charge users for running~\citep{Amirizaniani24,bourree2025robustmlauditing,cai2025gettingpayforauditing,chauvin2026tokenefficient,zhu2026auditing}, or (ii) manipulate the number of tokens billed to users~\citep{wang2025predictiveauditing,sun2025coincountinginvisiblereasoning,velasco2026auditing}. 
To the best of our knowledge, our work is the first to consider unfaithful providers who may strategically increase and reorder additional reasoning paths to overcharge users.

Finally, the audit procedure we study draws on a strand of the change-point detection literature focusing on detecting non-exchangeability~\citep{vovk2003testingexchangeability,pmlr-v152-vovk21b,dandapanthula2026offline,saha2026distribution}. This literature typically relies on conformal or plug-in martingales, whereas in our audit procedure, the likelihood ratio can be computed exactly and efficiently.

\section{Adaptive Self-Consistency with Count-Based Stopping} \label{sec:self_consistency}
Let $\Acal = \{a_i\}_{i \in [K]}$, with $K \in \{2, 3, \ldots\}$, denote the set of possible answers that the provider's model can generate for a user's query,\footnote{In practice, a model generates a reasoning path, which eventually yields an answer $a_i$. Different reasoning paths may represent the same answer with varying wording and style, but explicitly modeling this variation is not relevant to our analysis.}
and let $p_i \geq 0$ be the probability of the provider's model generating answer $a_i$, with $\sum_{i=1}^K p_i = 1$.

A provider seeking to improve the quality of the final answer returned to the user can spend additional compute to generate multiple answers from the model and select the most common one as the final answer---a popular strategy known as self-consistency~\citep{wang2023selfconsistency}.
Concretely, the provider generates a sequence $\y = (y_1, \dots, y_N)$ of independent answers from the model and returns the empirical mode $a_{\widehat{\imath}}$, where
\begin{equation}\label{eq:definition-counts}
    \widehat{\imath} \in \arg\max_{i \in [K]} s_i \quad \text{and} \quad s_i = \sum_{j=1}^N \mathbbm{1}[y_j = a_i].
\end{equation}
Thus, self-consistency uses $N$ samples to  estimate the mode of the model's answer distribution---that is, its most likely answer $a_{i^\star}$, where $i^\star \in \arg\max_{i \in [K]} p_i$. Throughout, we assume that this mode is unique.
While increasing $N$ yields a more reliable estimate of $a_{i^\star}$, it also incurs greater computational cost. This tradeoff has motivated interest in adaptive stopping rules~\citep{aggarwal2023samplestepbystep, feng2026optimal, huang2026optimal}, which sequentially evaluate generated answers to decide whether enough answers have been generated to identify the mode with high confidence. Rather than fixing $N$ in advance, these rules aim to generate only as many answers as a query requires.

For the remainder of the main analysis, we consider a setting where the provider commits to an adaptive stopping rule $\tau(s_1, \ldots, s_K) \in \{0, 1\}$, where $1$ indicates that  the counts $s_1, \ldots, s_K$ are sufficient to stop and identify the mode with high confidence and $0$ indicates that generation should continue.  The provider  discloses to the user the entire sequence of generated answers $\y$, together with the empirical mode $a_{\widehat{\imath}}$.
We focus on stopping rules that depend on the answer counts $(s_1, \dots, s_K)$ rather than on the full sequence $\y$, since these counts are sufficient statistics for the answer probabilities $(p_1, \dots, p_K)$, and consequently, for the mode of the distribution. 
We illustrate two such stopping rules below and discuss additional rules from the literature in Appendix~\ref{appx:stopping_rules}.
\begin{example}[PPR-1v1 stopping rule~\citep{pac_mode_estimation}] The PPR-1v1 stopping rule constructs a prior-posterior-ratio (PPR) martingale confidence sequence for the probability $p_{\widehat{\imath}}$ of the empirical mode after $N$ observed answers and stops as soon as this confidence sequence only contains sufficiently high values, declaring $a_{\widehat{\imath}}$ to be the mode. 
Concretely, the PPR-1v1 stopping rule can be written as
\begin{align*}
    \tau_{\text{PPR-1v1}}(s_1, \dots, s_K) = \mathbbm{1} \left[ \frac{(1/2)^{s_{(1)}}\cdot \, (1/2)^{s_{(2)}}}{\int_{0}^{1} q^{s_{(1)}} \cdot(1-q)^{s_{(2)}} \, dq} \leq \frac{\delta}{K-1} \right],
\end{align*}
where $\delta \in (0, 1)$ is a user-defined bound on the probability that the empirical mode $a_{\widehat{\imath}}$ does not coincide with the true mode $a_{i^\star}$ at stopping, and $s_{(1)}\geq s_{(2)}\geq \dots \geq s_{(K)}$ are the ordered answer counts.
\label{ex:ppr-1v1}
\end{example}
While $\tau_{\text{PPR-1v1}}$ provides theoretical guarantees on the probability of incorrectly identifying the mode, it can be overly conservative under practical answer-generation budgets~\citep{feng2026optimal}. Other stopping rules therefore relax these theoretical guarantees in favor of stronger empirical performance.

\begin{example}[ASC stopping rule~\citep{aggarwal2023samplestepbystep}] The ASC stopping rule approximates the posterior probability that the current empirical mode is the true mode using a uniform prior over the possible answer probabilities $(p_1, \dots, p_K)$. Specifically, for a fixed parameter $\gamma \in (0,1)$, the stopping rule is:
\begin{align*}
    \tau_{\text{ASC}}(s_1, \dots, s_K) &= \mathbbm{1} \left[ \frac{\int_{1/2}^{1} q^{s_{(1)}} \cdot (1-q)^{s_{(2)}} \, dq}{\int_{0}^{1} q^{s_{(1)}}\cdot(1-q)^{s_{(2)}} \, dq} \ge \gamma \right].
\end{align*}
Unlike $\tau_{\text{PPR-1v1}}$, $\gamma$ does not necessarily bound the probability that the empirical mode is incorrect.
\label{ex:ASC}
\end{example}
A provider that faithfully implements a stopping rule generates exactly as many answers as the rule prescribes. In the next section, however, we show that an unfaithful provider can nonetheless artificially increase the number of generated answers and overcharge the user while making each additional answer appear necessary under the stopping rule.

\section{Inflating Reasoning Paths Under Adaptive Stopping}
\label{sec:protocol}
Suppose a provider commits to an adaptive stopping rule $\tau$ that prescribes stopping after $N$ generated answers on a given query. A faithful provider would report the resulting sequence $\y=(y_1,\ldots,y_N)$ and charge the user for exactly these $N$ generations.
However, an unfaithful provider has an incentive to continue generating additional answers; if it reports a longer sequence $\y' = (y_1, \ldots, y_{N'})$ with $N' > N$, they can charge the user for the additional $N' - N$ generations.

\begin{algorithm}[!t]
\footnotesize
\algrenewcommand\algorithmicprocedure{\textbf{function}}

\caption{It returns a longer compatible sequence of answers}\label{alg:greedy_lookahead}
\begin{algorithmic}[1]
\renewcommand{\algorithmicrequire}{\textbf{Input: }}
\renewcommand{\algorithmicensure}{\textbf{Output: }}

\Require{Stopping rule $\tau$, compatible sequence $\y = (y_1, \ldots, y_N)$, budget $N_{\max}$, number of answers $K$, audit significance level $\alpha$}
\Ensure{Extended sequence $\y'$}

\vspace{1mm}

\State $\y' \gets \y$
\State $\texttt{LastCompatible} \gets \y$
\While{$\mathrm{len}(\y') < N_{\max}$}
    \State $m \gets \mathrm{len}(\y')$
    \State $(s'_1, \ldots, s'_K) \gets$ answer counts of $\y'$ 
    \State $y_{\mathrm{new}} \gets$ answer of a new reasoning path generated by the model \Comment{$\Pr(y_{\mathrm{new}} = a_i) = p_i$}
    \If{$\tau(s'_1,\dots,s'_K) = 0$} \Comment{The current sequence does not trigger $\tau$}
        \State $\y''= (y_1,\dots, y_m, y_{\mathrm{new}})$
    \Else
        \State $\texttt{LastCompatible} \gets \y'$
        \If{$y_{\mathrm{new}} = y_m$}
            \State \textbf{return} $\y'$ \Comment{Cannot defer: the new answer matches the current last answer} \label{line:discard}
        \EndIf
        \State $\y'' \gets (y_1, \ldots, y_{m-1}, y_{\mathrm{new}}, y_m)$ \Comment{Insert $y_{\mathrm{new}}$ before the last answer}
        \State $(s_1, \ldots, s_K) \gets$ answer counts of $ (y_1, \ldots, y_{m-1}, y_{\mathrm{new}})$ 
        \If{$\tau(s_1, \ldots, s_K) = 1$}
            \State \textbf{return} $\y'$ \Comment{Cannot defer: the new answer triggers the stopping rule}
        \EndIf
    \EndIf
    \If{$\texttt{AuditFlag}(\y'', \alpha)$} \Comment{$\y''$ exceeds audit threshold (Eq.~\eqref{eq:likelihood-test})} \label{line:start-flag}
        \State \textbf{return} \texttt{LastCompatible}
    \EndIf \label{line:end-flag}
    
    \State $\y' \gets \y''$ \Comment{Continue with the longer, still non-triggering sequence}
\EndWhile
\State \textbf{return} $\y'$
\end{algorithmic}
\end{algorithm}
The provider cannot simply report an arbitrary longer sequence $\y'$ without raising the user's suspicion. Since the user knows the stopping rule $\tau$ and observes the entire reported sequence $\y'$, they can verify whether the provider stopped generating answers when prescribed by the rule. In particular, if any prefix of the reported sequence already triggers the stopping rule, the user can conclude that the provider generated unnecessary answers. A provider seeking to avoid  suspicion is therefore restricted to reporting sequences that are \emph{compatible} with the stopping rule.

\begin{definition}[Compatible sequence]\label{def:compatible-sequence}
A sequence of answers $\y = (y_1, \ldots, y_N)$ is \emph{compatible} with a stopping rule $\tau$ if
\begin{equation}
    \tau(s_{1}^n, \dots, s_{K}^n) = 0 \ \text{ for all } n < N, \qquad \text{and} \qquad \tau(s_{1}^{N}, \dots, s_{K}^{N}) = 1,
\end{equation}
where $s_{1}^n, \dots, s_{K}^n$ denote the answer counts among the first $n$ elements of $\y$.
\end{definition}
The definition above captures the intuition that, from the user’s perspective, every answer in a compatible sequence appears necessary to identify the most likely answer with confidence, as prescribed by the stopping rule $\tau$.
We next show that an unfaithful provider can artificially increase the number of answers it generates while still reporting a compatible sequence. 
To this end, we introduce a simple, heuristic procedure (Algorithm~\ref{alg:greedy_lookahead}) that transforms a sequence $\y$ of length $N$, generated by faithfully following stopping rule $\tau$, into a longer compatible sequence $\y'$ of length $N' \geq N$, where the additional $N' - N$ answers are genuinely generated by the model.

Algorithm~\ref{alg:greedy_lookahead} starts from the faithfully stopped sequence $\y$ and extends it one answer at a time. Specifically, whenever the current sequence $(y_1, \ldots, y_{m-1}, y_m)$ triggers the stopping rule---and would therefore require a faithful provider to stop---the algorithm generates one additional answer $y_{\mathrm{new}}$ from the model and checks whether the sequence $(y_1, \ldots, y_{m-1}, y_{\mathrm{new}})$, obtained by replacing the last answer $y_m$ with $y_{\mathrm{new}}$, triggers the stopping rule. If it does not, the algorithm continues from the reordered sequence $(y_1, \ldots, y_{m-1}, y_{\mathrm{new}}, y_m)$; otherwise, it discards $y_{\mathrm{new}}$ and terminates, returning the most recent compatible sequence.\footnote{In principle, more sophisticated strategies could reorder the entire sequence, rather than only the final pair of answers. We focus on Algorithm~\ref{alg:greedy_lookahead} to highlight that even a simple heuristic suffices to artificially increase the number of answers.} Importantly, Algorithm~\ref{alg:greedy_lookahead} guarantees by construction that any sequence it returns appears, from the user's perspective, to have stopped exactly as prescribed by $\tau$. We formalize this property in the following proposition (see Appendix~\ref{appx:attack_proofs} for the proof).
\begin{proposition}\label{prop:one_step_correctness}
The sequence $\y'$ returned by Algorithm~\ref{alg:greedy_lookahead} is compatible with the stopping rule $\tau$.
\end{proposition}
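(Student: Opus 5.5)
The proof is a loop-invariant argument on the while loop of Algorithm~\ref{alg:greedy_lookahead}. By Definition~\ref{def:compatible-sequence}, a sequence of length $m$ is compatible if two things hold: $\tau$ vanishes on the counts of every proper prefix of length $n<m$, and $\tau=1$ on the counts of the full sequence. I will call a sequence with only the first property \emph{prefix-clean}. The invariant to maintain at the top of each iteration has two parts: (I1) the current $\y'$ is prefix-clean; (I2) $\texttt{LastCompatible}$ is compatible. Both hold at initialization, since $\y' = \texttt{LastCompatible} = \y$ and the input $\y$ is compatible by hypothesis.

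For preservation, let $m=\mathrm{len}(\y')$ and split on the branch taken.
\begin{itemize}
\item \textbf{Case $\tau(s'_1,\dots,s'_K)=0$.} Here $\y''=(y_1,\dots,y_m,y_{\mathrm{new}})$. Its proper prefixes are the proper prefixes of $\y'$, which are non-triggering by (I1), together with $\y'$ itself, which is non-triggering by the case assumption. So $\y''$ is prefix-clean. $\texttt{LastCompatible}$ is unchanged, so (I2) still holds.
\item \textbf{Case $\tau(s'_1,\dots,s'_K)=1$.} Together with (I1) this makes $\y'$ compatible, so the assignment $\texttt{LastCompatible}\gets\y'$ preserves (I2). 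Both early returns in this branch output $\y'$, which is compatible. If neither early return fires, then $\y''=(y_1,\dots,y_{m-1},y_{\mathrm{new}},y_m)$. Its prefixes of length $<m$ coincide with those of $\y'$ and are non-triggering by (I1). Its prefix of length $m$ is $(y_1,\dots,y_{m-1},y_{\mathrm{new}})$, which is non-triggering because the explicit check $\tau(s_1,\dots,s_K)=1$ failed. Hence $\y''$ is prefix-clean.
\end{itemize}
The audit return (lines~\ref{line:start-flag}--\ref{line:end-flag}) outputs $\texttt{LastCompatible}$, which is compatible by (I2). Otherwise $\y'\gets\y''$, so (I1) is restored for the next iteration.

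It remains to treat the exit through the final \textbf{return} $\y'$ after the budget $N_{\max}$ is reached. I expect this step to be the main obstacle. (I1) guarantees only that $\y'$ is prefix-clean; compatibility additionally requires $\tau(\y')=1$. This holds when the last update came from the reordering branch \emph{and} the reordered sequence, whose counts equal those of the old triggering $\y'$ plus one copy of $y_{\mathrm{new}}$, still triggers. Neither condition is guaranteed without extra structure on $\tau$, and a plain append in the $\tau=0$ branch can leave $\y'$ non-triggering.

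My first attempt would be to close this gap in one of two ways. One is to strengthen the invariant to track whether $\tau(\y')=1$, and to read the last line, consistently with the audit branch, as returning $\y'$ if it triggers and $\texttt{LastCompatible}$ otherwise. With that reading, (I2) immediately finishes the proof. The other is to check whether the stopping rules of interest (Examples~\ref{ex:ppr-1v1} and~\ref{ex:ASC}) have enough monotonicity in the counts to guarantee $\tau=1$ on the budget-exhausted sequence. I would go with the first fix and state it explicitly. Under it, every return statement outputs either a sequence shown above to be compatible or $\texttt{LastCompatible}$, and this proves the proposition.
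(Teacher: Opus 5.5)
Your loop invariant (the current $\y'$ has no triggering proper prefix, and $\texttt{LastCompatible}$ is compatible) is the paper's induction on $P(T)$, and your treatment of the append and reorder branches, the two early returns and the audit return matches the paper step for step. Your worry about the budget exit is justified: the paper's proof silently omits this case. For a concrete failure, take ASC with $\gamma=0.95$ and $K=2$, the compatible input $\y=(a_1,a_1,a_1,a_1)$, $N_{\max}=5$, $y_{\mathrm{new}}=a_2$ and $\alpha=0.1$; the audit ratio is $2p_1<10$, so nothing is flagged. The algorithm as written then returns $(a_1,a_1,a_1,a_2,a_1)$, with top-two counts $(4,1)$ and $H(4,1)=57/64<0.95$, which is not compatible. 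So the statement genuinely needs your first fix (return $\texttt{LastCompatible}$ when the budget-exhausted $\y'$ does not trigger), and your second route, via monotonicity of the example rules, cannot work.
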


%
%
The proposition above guarantees that an unfaithful provider using Algorithm~\ref{alg:greedy_lookahead} can make every additional answer appear necessary under the stopping rule, but it does not quantify the benefit that the provider can obtain from this strategy.
To address this, we derive a lower bound on the average number of additional answers in the sequence  $\y'$ reported to the user for a broad class of stopping rules, which we refer to as admissible stopping rules. This class includes PPR-1v1 (Example~\ref{ex:ppr-1v1}) and ASC (Example~\ref{ex:ASC}).
\begin{definition}[Admissible stopping rule]\label{def:admissible_count_based_stopping}
A stopping rule $\tau$ is \emph{admissible} if it satisfies the following properties.
\begin{enumerate}[label=(\roman*)]
    \item \textbf{Count invariance:} $\tau(s_1, s_2, \dots, s_K)$ depends only on the counts of the first and second most frequent answers, which we write as $\tau(s_{(1)}, s_{(2)})$.

    \item \textbf{Nontriviality:} a single observed answer is insufficient to identify the mode with high confidence; that is, if $s_{(1)}=1$ and $s_{(2)}=0$, then $\tau(s_{(1)}, s_{(2)}) = 0$.

    \item \textbf{Monotonicity in the second most frequent count:} the stopping rule is non-increasing in the second most frequent count; that is, if $\tau(s_{(1)}, s_{(2)}) = 0$ and $s_{(1)} \geq s_{(2)} + 1$, then $\tau(s_{(1)}, s_{(2)} + 1) = 0$.

    \item \textbf{Fixed-margin monotonicity:} if $\tau(s_{(1)}, s_{(2)}) = 0$, then $\tau(s_{(1)} + c, s_{(2)} + c) = 0$ for all $c \in \mathbb{N}^+$.
\end{enumerate}
\end{definition}
Condition $(i)$ states that, to determine whether the current empirical mode can be reliably identified as the true mode, only the counts of the two most frequent answers matter, rather than those of the remaining answers or the order in which the answers were generated. Condition $(ii)$ rules out trivial stopping rules. Finally, conditions $(iii)$ and $(iv)$ capture the intuition that the provider's confidence in the empirical mode cannot increase either by observing an additional second most frequent answer, which brings its count closer to that of the mode, or by observing the empirical mode and the second most frequent answer in equal measure, which leaves their count difference unchanged. 
The following result lower-bounds the expected number of additional answers obtained from running Algorithm~\ref{alg:greedy_lookahead}. We focus here on the case of two answers, $K = 2$, and refer the reader to Appendix~\ref{appx:subsec:lower_bound_gain} for the general case.
\begin{proposition}\label{prop:provable_gain_binary} 
Let $\y=(y_1,\dots,y_N)$ be any sequence of answers compatible with an admissible stopping rule $\tau$, and let $\y'=(y'_1,\dots, y'_{N'})$ be a sequence returned by Algorithm~\ref{alg:greedy_lookahead}. If $K=2$, then it holds that:
\begin{equation}\label{eq:lower-bound-main}
    \mathbb{E}[N' - N] \geq \frac{p_{(1)} p_{(2)}\,(p_{(1)}^{\,d} - p_{(2)}^{\,d}) - d\,(p_{(1)} - p_{(2)})\, p_{(2)}^{\,d+1}}{(p_{(1)} - p_{(2)})\,(p_{(1)}^{\,d+2} - p_{(2)}^{\,d+2})} \quad \text{ where }\quad  d = s_{(1)} - s_{(2)} - 1,
\end{equation}
where $(s_{(1)}, s_{(2)})$ denote the answer counts in $\y$ in non-increasing order, $(p_{(1)}, p_{(2)})$ denote the corresponding answer probabilities under the model, and the expectation is taken over the additional answers generated by the model within Algorithm~\ref{alg:greedy_lookahead}.
\end{proposition}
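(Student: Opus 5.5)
We will analyze Algorithm~\ref{alg:greedy_lookahead} for $K=2$ as a random walk on the margin $D = s_{(1)}-s_{(2)}$ between the leader's and the runner-up's counts. Throughout we set the audit flag aside and take $N_{\max}=\infty$. If the audit or the budget binds, the statement has to be read with the corresponding truncation, and we will check this separately.

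\emph{Step 1: the structure of the initial state.} Since $\y$ is compatible, $\tau(s_{(1)},s_{(2)})=1$, while the prefix of length $N-1$ does not trigger. By (iii), removing a runner-up answer cannot turn a triggering state into a non-triggering one. Hence the last answer $y_N$ is a leader answer, and $\tau(s_{(1)}-1,s_{(2)})=0$, which is the state with margin $d=s_{(1)}-s_{(2)}-1$.

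\emph{Step 2: propagating non-triggering.} Combining (iii) and (iv), every state reachable from $(s_{(1)}-1,s_{(2)})$ by adding equal numbers of both answers, and then possibly more runner-up answers, is non-triggering, as long as the leader identity is unchanged. Together with (i) and (ii), this yields the following claim, which we will prove first. Every count pair visited by the algorithm after the first deferral, with margin in $\{0,1,\dots,d\}$, is non-triggering. In such states the algorithm simply appends the new answer (lines with $\tau=0$).

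\emph{Step 3: the dynamics of the margin.} At the initial triggering state, one new sample is drawn. With probability $p_{(1)}$ it equals $y_m$ and the algorithm stops with zero gain. With probability $p_{(2)}$ it is a runner-up answer; then $(s_{(1)}-1,s_{(2)}+1)$ is non-triggering by (iii), so the sample is deferred. At this point the reported counts have margin $d$, and one extra answer has been gained. From then on, each sample moves the margin $+1$ with probability $p_{(1)}$ and $-1$ with probability $p_{(2)}$. The walk gains one answer per step while the margin stays in $\{0,\dots,d\}$.

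\emph{Step 4: reduction to a gambler's-ruin duration.} We lower-bound the gain by the number of steps the walk started at $d$ spends before first exiting the interval $\{0,\dots,d+1\}$. At margin $d+1$ we are back at a triggering configuration, and the process restarts with a nonnegative additional gain. At or below $0$ the leader may switch, and we simply stop counting, which is valid since further gains are nonnegative. The bound then follows from the classical expected duration of a biased random walk with absorbing barriers. With $r=p_{(2)}/p_{(1)}$, barriers at $-1$ and $d+1$ (or $0$ and $d+1$, to be pinned down by matching exponents), and start at $d$, the duration is
\[
\frac{x}{p_{(2)}-p_{(1)}}-\frac{L}{p_{(2)}-p_{(1)}}\cdot\frac{1-r^{x}}{1-r^{L}}.
\]
The exponent $d+2$ in the denominator suggests a walk started at $x=d+1$ on $\{0,\dots,d+2\}$ after shifting indices, i.e.\ an interval of length $L=d+2$. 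Weighting by the initial factor $p_{(2)}$ and simplifying should reproduce \eqref{eq:lower-bound-main}.

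The main obstacle is Step 2 together with the boundary bookkeeping in Step 4. We must show rigorously that (iii) and (iv) alone guarantee non-triggering throughout the counted excursion, including whether margin $0$ (a tie) is admissible. We must also identify exactly which barriers yield the stated closed form. We will first try the interval $\{-1,\dots,d+1\}$ with start $d$, and verify the resulting formula against the case $d=0$ directly.
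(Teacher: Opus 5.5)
\textbf{There is a genuine gap in Step~4, at the upper barrier.} As written, you make margin $d+1$ absorbing and discard whatever is gained after the walk returns there: your barriers are at $d+1$, with the restart treated as ``nonnegative additional gain''. Those restarts carry roughly half of the gain when $p_{(1)}\approx p_{(2)}$, and without them the stated bound is out of reach. Take the most generous valid version of your scheme: count the initial deferral, treat margins $\{0,\dots,d\}$ as interior, and count every step including the exit step. It yields $p_{(2)}\bigl(1+\mathbb{E}_d[\tau]\bigr)$, which tends to $(d+2)/2$ as $p_{(1)},p_{(2)}\to 1/2$. The right-hand side of \eqref{eq:lower-bound-main} tends to $d(d+1)/(d+2)$ in the same limit. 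For instance, with $d=4$, $p_{(1)}=0.51$, $p_{(2)}=0.49$, you get about $2.88$ against a target of about $3.20$. So for $d\ge 4$ and nearly tied probabilities your argument proves something strictly weaker than the proposition. Choosing barriers ``by matching exponents'' cannot repair this; the barriers must come from the dynamics of Algorithm~\ref{alg:greedy_lookahead}.

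\textbf{The missing observation is that margin $d+1$ is not absorbing.} When the walk climbs back from $d$ to $d+1$, the last reported answer is again the leader, and the counts are $(s_{(1)}+v,\,s_{(2)}+v)$ for some $v$. If these trigger $\tau$, a runner-up draw is still deferred. The swapped prefix has counts $(s_{(1)}+v-1,\,s_{(2)}+v+1)$, which are non-triggering by (iv) applied to $(s_{(1)}-1,s_{(2)})$ followed by (iii). This step uses $d\ge 1$, which you must derive from (ii), (iv) and (iii); your planned $d=0$ check is therefore vacuous. Only a leader draw at margin $d+1$ can end the run.

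Hence the correct walk lives on margins $\{1,\dots,d+1\}$, starts at $d+1$ before the first draw, and is absorbed at $0$ and $d+2$. It gives $\mathbb{E}[N'-N]\ge \mathbb{E}[T]-1$, equivalently $p_{(2)}$ times the expected exit time from $\{1,\dots,d+1\}$ started at $d$. This is exactly the paper's event $\mathcal{E}_k$, whose displacement $0$ is your margin $d+1$. Its duration is the classical formula with $x=d+1$, $L=d+2$, the very parameters you guessed.

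Do not expect the simplification to \emph{reproduce} \eqref{eq:lower-bound-main}, though. The stated right-hand side is $\sum_{s=1}^{d} g(s)$, the expected time spent at margins $\{1,\dots,d\}$. That equals $\mathbb{E}[T]-1$ minus the expected number $g(0)-1$ of returns to margin $d+1$; the paper discards this term using $g(0)\ge 1$. You therefore need to finish with an inequality, for example by computing the occupation masses $g(s)$ from the linear system as the paper does.
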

The intuition for the bound, formalized in Appendix~\ref{appx:subsec:lower_bound_gain}, follows from the monotonicity properties of admissible stopping rules. Sampling additional occurrences of the second most frequent answer $a_{(2)}$ in the sequence $\y$ requires the provider to generate at least as many occurrences of the most frequent answer $a_{(1)}$ before the stopping rule can trigger again. Algorithm~\ref{alg:greedy_lookahead} can therefore continue extending the sequence as it generates additional occurrences of $a_{(2)}$. Accordingly, the lower bound grows as $p_{(2)} \to p_{(1)}$ and the mode of the answer distribution becomes harder to distinguish. This is more likely for difficult queries, where the model is less confident in its answer. In contrast, easier queries have answer distributions more concentrated around the mode, leaving less uncertainty for the provider to exploit by generating additional answers. We verify this relationship empirically in Appendix~\ref{app:synthetic-data}.
Lastly,  while the bound in Proposition~\ref{prop:provable_gain_binary} depends only on the two most frequent answers in the sequence $\y$, Appendix~\ref{appx:subsec:lower_bound_gain} derives a tighter lower bound on $\E[N'-N]$ that depends on the counts and probabilities of all $K$ answers, which can be computed by solving a linear system of equations.

\section{Auditing Self-Consistency Reasoning Paths}
\label{sec:auditing_individual_seqs}
We showed in the previous section that an unfaithful provider using Algorithm~\ref{alg:greedy_lookahead} can generate additional self-consistency reasoning paths that appear necessary under the stopping rule. 
In this section, we show how an unfaithful provider can leverage an exact likelihood-ratio test to ensure the sequence generated by Algorithm~\ref{alg:greedy_lookahead} remains statistically indistinguishable from one generated faithfully, thereby avoiding detection by an auditor.

To make this precise, we formalize the auditor's inspection of a reported sequence $\y'$ as a hypothesis test with the following null and alternative hypotheses.
\begin{equation*}
    \begin{dcases}
        H_0:  \y' \overset{ \mathrm{iid}}{\sim} \mathrm{Categorical}(p_1, \ldots, p_K) & \quad (\mathrm{null})\\
         H_1: \y' \,\, \text{is generated by Algorithm~\ref{alg:greedy_lookahead} from} \,\, \y \overset{ \mathrm{iid}}{\sim} \mathrm{Categorical}(p_1, \ldots, p_K) & \quad (\mathrm{alternative}).
    \end{dcases}
\end{equation*}
The null hypothesis $H_0$ corresponds to a faithful provider who generates a sequence following the stopping rule, whereas the alternative $H_1$ corresponds to an unfaithful provider who first generates a sequence $\y$ following the stopping rule and then applies Algorithm~\ref{alg:greedy_lookahead} to obtain a longer sequence $\y'$.

An auditor could use any possible test to reject $H_0$, but the celebrated Neyman--Pearson lemma guarantees that, among all tests of a given significance level $\alpha \in (0, 1)$, the most powerful is the likelihood-ratio test, which rejects $H_0$ whenever the likelihood ratio $\mathbb{P}_{H_1}(\y')/\mathbb{P}_{H_0}(\y')$ exceeds a  threshold $c_\alpha$, chosen so that the test has level $\alpha$~\citep{Lehmann2005}. Computing $c_\alpha$ exactly is typically intractable because it requires knowledge of the distribution of the likelihood ratio under $H_0$. However, since the likelihood ratio is an \emph{e-value}, setting the threshold to the explicit value $1/\alpha$ guarantees that the probability of falsely flagging a faithful provider is at most $\alpha$~\citep{grunwald2024, ramdas2025evalues, velasco2026auditing}, resulting in the following hypothesis test:\footnote{For ease of exposition, we focus on a setting where the provider reports a single sequence $\y'$. In Appendix~\ref{appx:subsec:auditing_multiple_seqs}, we extend our analysis to a setting where the provider reports multiple sequences.}
\begin{equation}\label{eq:likelihood-test}
     \text{reject } H_0 \quad \text{if} \quad \frac{\mathbb{P}_{H_1}(\y')}{\mathbb{P}_{H_0}(\y')} \geq \frac{1}{\alpha}.
\end{equation}
%
%

We consider an unfaithful provider who anticipates such an audit and strategically avoids detection by the likelihood-ratio test in Eq.~\eqref{eq:likelihood-test}. 
To do so, the provider must compute the likelihood ratio, which we now show can be done efficiently. The denominator is immediate: under $H_0$, the answers are sampled independently from the model, yielding $\mathbb{P}_{H_0}(\y') = \prod_{i=1}^{N'} p_{y'_i}$.
Computing the numerator $\mathbb{P}_{H_1}(\y')$ is more challenging because several distinct executions of Algorithm~\ref{alg:greedy_lookahead} can return the same reported sequence $\y'$, as the following example illustrates.
\begin{example}
    Consider a query with three possible answers $\mathcal{A} = \{a_1, a_2, a_3 \}$, a sequence $\y' = (a_1, a_2, a_3, a_1)$ reported by the provider, and a stopping rule that instructs the provider to stop  as soon as any answer is generated twice, that is, $\tau(s_{(1)}, s_{(2)}) = \mathbbm{1}\{ s_{(1)} \geq 2 \}$.
    %
    To compute the likelihood $\mathbb{P}_{H_1}(\y')$, observe that Algorithm~\ref{alg:greedy_lookahead} must have been executed on one of the following compatible input sequences $\y$:
    \begin{enumerate}
        \item \textbf{$\y=(a_1,a_2,a_3,a_1)$.} The algorithm then generates one additional $a_1$ and discards it, returning $\y’ = \y$. This occurs with probability $p_1\cdot p_2\cdot p_3\cdot p_1\cdot p_1$.
        \item \textbf{$\y=(a_1,a_2,a_1)$.} The algorithm must then generate $a_3$ and construct the partial sequence $(a_1, a_2, a_3, a_1)$. It subsequently generates one additional $a_1$ and discards it. This occurs with probability $p_1\cdot p_2\cdot p_1\cdot p_3\cdot p_1$.
        \item \textbf{$\y=(a_1,a_1)$.} The algorithm first generates $a_2$ and swaps it with the second occurrence of $a_1$ to obtain $(a_1, a_2,a_1)$, and then generates $a_3$ to construct the sequence $(a_1, a_2 ,a_3, a_1)$. Finally, it generates one additional $a_1$ and discards it. This occurs with probability $p_1\cdot p_1\cdot p_2\cdot p_3\cdot p_1$.    
\end{enumerate}
    Since these cases are mutually exclusive and exhaust all the ways in which Algorithm~\ref{alg:greedy_lookahead} can return $\y'$, summing their probabilities gives $\mathbb{P}_{H_1}(\y')=3\cdot p_1^3 \cdot p_2\cdot p_3$.
\label{ex:path_counts}
\end{example}

\begin{algorithm}[t]
\caption{It computes the likelihood of an observed sequence under $H_1$}
\label{alg:likelihood}

\footnotesize

\begin{algorithmic}[1]

\renewcommand{\algorithmicrequire}{\textbf{Input: }}
\renewcommand{\algorithmicensure}{\textbf{Output: }}

\Require{Compatible sequence $\y' = (y'_1,\dots,y'_{N'})$, stopping rule $\tau$, class probabilities $p_1,\dots,p_K$.}
\Ensure{Likelihood of $\y'$ under Algorithm \ref{alg:greedy_lookahead}}

\vspace{1mm}

\State $\texttt{Counts} \gets \underbrace{(1,0,\dots,0 )}_{N'  \text{ elements}}$

\State 

\For{$m=2,\dots, N' $}
    \State $\texttt{PrevCounts} = 0$ 

    \For{$L = 0, \dots, m - 2$}
        \State \texttt{PossibleMultipleSwap} = \texttt{True}
        \For{$k = 1,\dots,L$}
            \State $(s_1, \ldots, s_K) \gets$ answer counts of $ (y'_1, \ldots, y'_{m - k - 1}, y'_m)$ 
            \If{ $\tau \bigl( s_1, \ldots, s_K \bigr) = 0$ }
            \State \texttt{PossibleMultipleSwap} = \texttt{False} 
            \EndIf
        \EndFor
        \If{\texttt{PossibleMultipleSwap}}
         \State $\texttt{PrevCounts} = \texttt{PrevCounts} + \texttt{Counts}[m - L - 1]$ \label{line:DP-update}
         \EndIf
        
    \EndFor
    \State $\texttt{Counts}[m] = \texttt{PrevCounts}$ 
\EndFor
\vspace{1mm}
\State \Return $\texttt{Counts}[N']\cdot p_{y'_{N'}} \cdot \prod_{i=1}^{N'} p_{y'_i}$ \label{alg-line:likelihood-result}
\end{algorithmic}
\end{algorithm}

The intuition from the example above generalizes. In particular, the provider can efficiently compute  $\mathbb{P}_{H_1}(\y')$ for any compatible sequence using the dynamic programming procedure in Algorithm~\ref{alg:likelihood}. Algorithm~\ref{alg:likelihood} exploits the observation that every answer in a reported sequence $\y' = (y'_1, \dots, y'_{N'})$ returned by Algorithm~\ref{alg:greedy_lookahead} is generated by the model. Each execution generates one additional answer---equal to the last reported answer $y'_{N'}$---which triggers the stopping rule and is discarded. Hence, the probability of obtaining $\y'$ from any given compatible input $\y$ equals $\prod_{i=1}^{N'} p_{y'_i} \cdot p_{y'_{N'}} = \mathbb{P}_{H_0}(\y') \cdot p_{y'_{N'}} $, independent of $\y$.
Computing $\mathbb{P}_{H_1}(\y')$ therefore reduces to counting the number of compatible input sequences $\mathbf{y}$ that can produce $\mathbf{y}'$. 
Algorithm~\ref{alg:likelihood} tracks this quantity iteratively in a vector $\texttt{Counts}$, whose $m$-th entry stores the number of compatible input sequences capable of generating the prefix $(y'_1, \ldots, y'_m)$. The final entry, $\texttt{Counts}[N']$, gives the total count of such sequences. The following proposition formalizes the correctness of this procedure.\footnote{In Appendix~\ref{appx:sec:likelihoods_linear}, we further show that Algorithm~\ref{alg:likelihood} can be implemented in time linear in the length $N'$ of the reported sequence.}
\begin{proposition}\label{prop:likelihood_correctness}
    Given an observed sequence of answers $\y'$ reported by the provider, Algorithm~\ref{alg:likelihood} returns the likelihood $\mathbb{P}_{H_1}(\y')$.
\end{proposition}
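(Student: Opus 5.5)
The plan is to split $\mathbb{P}_{H_1}(\y')$ into a sum over \emph{execution histories} of Algorithm~\ref{alg:greedy_lookahead}. Each history consists of a faithfully stopped input $\y$ together with the sequence of model draws made inside the algorithm. I will show two things: every history that outputs $\y'$ has the same probability, and the number of such histories equals $\texttt{Counts}[N']$ as computed by the dynamic program. Throughout I assume the budget $N_{\max}$ and the audit check in lines~\ref{line:start-flag}--\ref{line:end-flag} do not bind. If they can bind, the same argument should go through with the obvious truncation.

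\textbf{Step 1 (equal weight of each history).} Every draw, whether it belongs to the faithful phase or to the algorithm, enters $\y'$ exactly once, except the final discarded draw. The probability of a history is therefore
\[
\prod_{i=1}^{N'} p_{y'_i}\cdot(\text{probability of the terminating draw}),
\]
which does not depend on how the draws were reordered. For the terminating factor I will use line~\ref{line:discard}: the run ends exactly when the current sequence triggers $\tau$ and the fresh draw equals its last element $y'_{N'}$, which has probability $p_{y'_{N'}}$. The algorithm also has a second exit, where $y_{\mathrm{new}}\neq y_m$ but the prefix with $y_{\mathrm{new}}$ already triggers $\tau$. This exit must also be accounted for, either by showing it is absorbed into the same factor or by adding it to the termination probability. \emph{I expect this to be the main obstacle}: the stated formula multiplies only by $p_{y'_{N'}}$. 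I would first check, using the structure of the counts at a triggering state, whether such a $y_{\mathrm{new}}$ can actually occur. If it can, the factor must be corrected to the total probability of a non-deferrable draw.

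\textbf{Step 2 (structure of a history).} Along a run, the algorithm alternates between two kinds of moves:
\begin{itemize}
\item \emph{Appends}: when $\tau=0$, the new draw goes at the end and becomes the last element.
\item \emph{Insertions}: when $\tau=1$, the new draw is placed just before the current last element, which is ``carried'' one slot to the right.
\end{itemize}
I will call position $m$ of $\y'$ a \emph{triggering state} if at some moment the working sequence is $(y'_1,\dots,y'_{m-1},y'_m)$ and triggers $\tau$. Suppose $y'_m$ was last appended while the working prefix was $(y'_1,\dots,y'_{m-L-1})$ and then carried through $L$ consecutive insertions. Then each intermediate working sequence $(y'_1,\dots,y'_{m-k-1},y'_m)$, for $k=1,\dots,L$, must trigger $\tau$, since this is exactly the condition for an insertion rather than an append. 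The move before that block ends at a state of length $m-L-1$ that is itself reachable. This is precisely the check made by \texttt{PossibleMultipleSwap}, and it gives the recurrence in line~\ref{line:DP-update}.

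\textbf{Step 3 (bijection and induction).} I will prove by induction on $m$ that $\texttt{Counts}[m]$ equals the number of pairs (compatible input $\y$, run of the algorithm) that reach the working sequence $(y'_1,\dots,y'_m)$. The base case is $\texttt{Counts}[1]=1$.

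For the inductive step I map each history to its last carry block $L$. Two histories with different $L$, or with different histories up to $m-L-1$, produce different move sequences, so the map is injective. Conversely, every valid choice of $L$ with all intermediate triggers satisfied extends any history counted in $\texttt{Counts}[m-L-1]$. I must also treat the case where the faithful input $\y$ itself ends inside this segment. Compatibility of $\y$ (Definition~\ref{def:compatible-sequence}) means its prefixes do not trigger and its last element does, and Proposition~\ref{prop:one_step_correctness} ensures that all intermediate working sequences stay compatible. Together these should show that this case fits the same recurrence, with $\texttt{Counts}[1]=1$ seeding the empty history.

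Combining the histories counted by $\texttt{Counts}[N']$ with the common weight from Step~1 gives the quantity returned in line~\ref{alg-line:likelihood-result}. I would then sanity-check the whole argument against Example~\ref{ex:path_counts}, where the count should come out to $3$.
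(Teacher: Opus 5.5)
Your decomposition is the paper's: each history gets the same weight, the recursion runs over the length $L$ of the final carry block, and the cases for different $L$ are disjoint. The difficulty you flag in Step~1 is real, and you leave it open. The paper's proof passes over it by simply asserting that the single unreported draw equals $y'_{N'}$.

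\textbf{Step 1 needs admissibility.} The gap closes only for admissible rules, via Proposition~\ref{prop:top_two_margin}. Since $\y'$ is compatible, $y'_{N'}$ leads $\y'$ by at least $2$. So in $P=(y'_1,\dots,y'_{N'-1})$ it has some count $c$, and every other answer has count at most $c-1$. Take $v\neq y'_{N'}$. All proper prefixes of $(P,v)$ are non-triggering, so if $(P,v)$ triggered it would be compatible, and $v$ would have to lead it by at least $2$. But in $(P,v)$, $v$ has count at most $c$ while $y'_{N'}$ still has count $c$. Hence the second exit never fires, and the termination factor is exactly $p_{y'_{N'}}$.

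Without admissibility the formula is false, and the sanity check you propose exposes this. The rule $\mathbbm{1}\{s_{(1)}\ge 2\}$ of Example~\ref{ex:path_counts} violates fixed-margin monotonicity, since $\tau(1,0)=0$ but $\tau(2,1)=1$. From $(a_1,a_2,a_3,a_1)$, a fresh $a_2$ or $a_3$ also makes Algorithm~\ref{alg:greedy_lookahead} return. So with $K=3$ the count is indeed $3$, but the true likelihood is $3p_1^2p_2p_3$, not the $3p_1^3p_2p_3$ returned in line~\ref{alg-line:likelihood-result}. You must therefore assume $\tau$ admissible, as the linear-time implementation in Appendix~\ref{appx:sec:likelihoods_linear} already does.

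\textbf{The Step 3 invariant is not what the DP counts.} For $m<N'$, the prefix $(y'_1,\dots,y'_m)$ of the compatible $\y'$ never triggers. Histories therefore typically pass through it during the faithful phase, before any compatible input exists. In Example~\ref{ex:path_counts}, $\texttt{Counts}[2]=1$, yet no compatible input ever has $(a_1,a_2)$ as a working sequence of Algorithm~\ref{alg:greedy_lookahead}. Also, five compatible inputs extend $(a_1,a_2)$, so the invariant fails under either reading of ``run''.

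The clean fix is the one your ``empty history'' remark gestures at. Faithful generation uses the same rule as the non-triggering branch of Algorithm~\ref{alg:greedy_lookahead}, namely append. So faithful generation followed by the algorithm is one process started from the empty sequence. A history is then just its draw sequence, and $\y$ is recovered as the state at the first trigger. Let $\texttt{Counts}[m]$ count draw prefixes $(Z_1,\dots,Z_m)$ whose working sequence is $(y'_1,\dots,y'_m)$. Then:
\begin{itemize}
\item no separate case for where $\y$ ends is needed;
\item your Step~2 yields the recurrence directly;
\item the other conditions for each insertion follow from compatibility of $\y'$. The condition that $(y'_1,\dots,y'_r)$ is non-triggering holds because it is a proper prefix. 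The condition $y'_r\neq y'_m$ holds because otherwise the \texttt{PossibleMultipleSwap} condition would make that proper prefix trigger.
\end{itemize}
Note also that Proposition~\ref{prop:one_step_correctness} does not say intermediate working sequences are compatible, only that their proper prefixes do not trigger. The paper's own invariant, phrased via \texttt{LastCompatible}, has the same imprecision.
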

As an immediate consequence, the provider can efficiently compute the likelihood ratio in Eq.~\eqref{eq:likelihood-test} using the following expression:\footnote{In practice, the provider can efficiently estimate the probability of the most frequent answer $y'_{N'}$ using the empirical frequency of $y'_{N'}$ among the answers in the generated sequence $\y$.} 
\begin{equation*}
    \frac{\mathbb{P}_{H_1}(\y')}{\mathbb{P}_{H_0}(\y')} = \frac{ \texttt{Counts}[N'] \cdot p_{y'_{N'}}\cdot \mathbb{P}_{H_0}(\y') }{\mathbb{P}_{H_0}(\y')} =  \texttt{Counts}[N'] \cdot p_{y'_{N'}}.
\end{equation*}
To avoid detection, the provider can monitor this ratio while generating additional answers in Algorithm~\ref{alg:greedy_lookahead} and terminate immediately before the value reaches or exceeds the threshold $1/\alpha$.
This is implemented in lines~\ref{line:start-flag}--\ref{line:end-flag} using the indicator:
\begin{equation*}
\texttt{AuditFlag}(\y, \alpha) := \mathbbm{1} \left[\texttt{Counts}[N'] \cdot p_{y_{N'}}\ge \frac{1}{\alpha} \right].
\end{equation*}

\section{Experiments}
\label{sec:experiments}
In this section, we evaluate the empirical performance of Algorithm~\ref{alg:greedy_lookahead} on real queries from popular benchmark datasets across multiple LLM families.
We focus on the ASC stopping rule (Example~\ref{ex:ASC}), which has been shown to outperform PPR-1v1 (Example~\ref{ex:ppr-1v1}) empirically under a variety of conditions~\citep{feng2026optimal}. 
In Appendix~\ref{appx:sec:additional_results}, we show that our conclusions extend to other stopping rules, including potentially inadmissible ones.

\xhdr{Experimental setup}
We consider models from the \texttt{Llama} and \texttt{Qwen} families, including reasoning models distilled from \texttt{DeepSeek-R1}, and queries from  \texttt{GSM8K}~\citep{cobbe2021training} and \texttt{AIME}~\citep{aime25}  for mathematical reasoning and from \texttt{GPQA}~\citep{rein2023gpqagraduatelevelgoogleproofqa} for question answering. 
Following prior work on self-consistency~\citep{feng2026optimal}, we first construct an empirical answer distribution per query using $128$ and $32$ pre-generated answers for the instruct and reasoning models, respectively. 
For each query, we first process the pre-generated answers in their original order until the ASC stopping rule triggers. If the available pre-generated answers are insufficient to reach faithful stopping, we continue by sampling additional answers with replacement from the empirical answer distribution until the stopping rule is satisfied, yielding  $\y = (y_1, \dots, y_N)$.
Finally, starting from each sequence $\y$, we use Algorithm~\ref{alg:greedy_lookahead} to generate the reported sequence $\y'=(y'_1,\ldots,y'_{N'})$, sampling the additional $N'-N$ answers with replacement from the same distribution.
In all experiments, to limit computational overhead, we impose a cap of $5,000$ answers on both $N$ and $N' - N$.

\xhdr{Results} Table~\ref{table:asc_additional_answers_stats} and Figure~\ref{fig:excess_samples_by_difficulty} present summary statistics and the distribution of additional answers $N' - N$ generated by Algorithm~\ref{alg:greedy_lookahead} with $\alpha = 0.1$.
Across models and benchmark datasets, the results suggest that the distribution of additional answers is heavy-tailed: most reported sequences contain fewer than $5$ additional answers, while a long tail of sequences contains hundreds, or even thousands, of additional answers (Table~\ref{table:asc_additional_answers_stats}).
Moreover, the provider's capacity to overcharge varies substantially across benchmark datasets.
Figure~\ref{fig:asc_auditing} shows the $90$th percentile of the distribution of additional answers $N' - N$ for different values of the audit threshold $\alpha$.
Perhaps surprisingly, the provider's capacity to overcharge remains relatively insensitive to the choice of $\alpha$.

\begin{table}[t]
\small
\centering
\begin{tabular}{l|cccc|cccc|cccc}
\toprule
& \multicolumn{4}{c|}{\texttt{Llama-3.2-3B}}
& \multicolumn{4}{c|}{\texttt{Qwen2.5-7B}}
& \multicolumn{4}{c}{\texttt{DeepSeek-R1-Distill-Qwen-7B}} \\
& Mean & Median  & P75 & P90 
& Mean & Median  & P75 &  P90
& Mean & Median  & P75 & P90 \\
\midrule
\texttt{GSM8K} & 13.5 & 0.0 & 0.0 & 5.0 
 & 6.0 &  0.0 &  0.0 & 0.0 
& 3.1  & 0.0 & 0.0  & 3.0 \\
\texttt{AIME} & 101.0 & 5.0  & 17.0 & 63.0
& 82.0 & 3.0  & 11.0  & 56.0
& 69.6 &  3.0  & 11.0 & 39.9 \\
\texttt{GPQA} & 22.8 & 0.0 & 3.0  & 17.0
& 18.2 & 0.0  & 2.0 & 7.0
& 1.0 & 0.0  & 0.0  & 0.0 \\
\bottomrule
\end{tabular}
\caption{\textbf{Summary statistics of the additional answers generated by an unfaithful provider.} For each model and dataset, we report the mean, median, 75th percentile (P75), and 90th percentile (P90) of the additional answers $N'-N$ generated by Algorithm~\ref{alg:greedy_lookahead} on queries from \texttt{GSM8K}, \texttt{AIME}, and \texttt{GPQA}. 
In all experiments, we use the ASC stopping rule (Example~\ref{ex:ASC}) with $\gamma = 0.95$ and set $\alpha =0.1$.}
\label{table:asc_additional_answers_stats}
\end{table}

\begin{figure}[t]
\includegraphics[width=\textwidth]{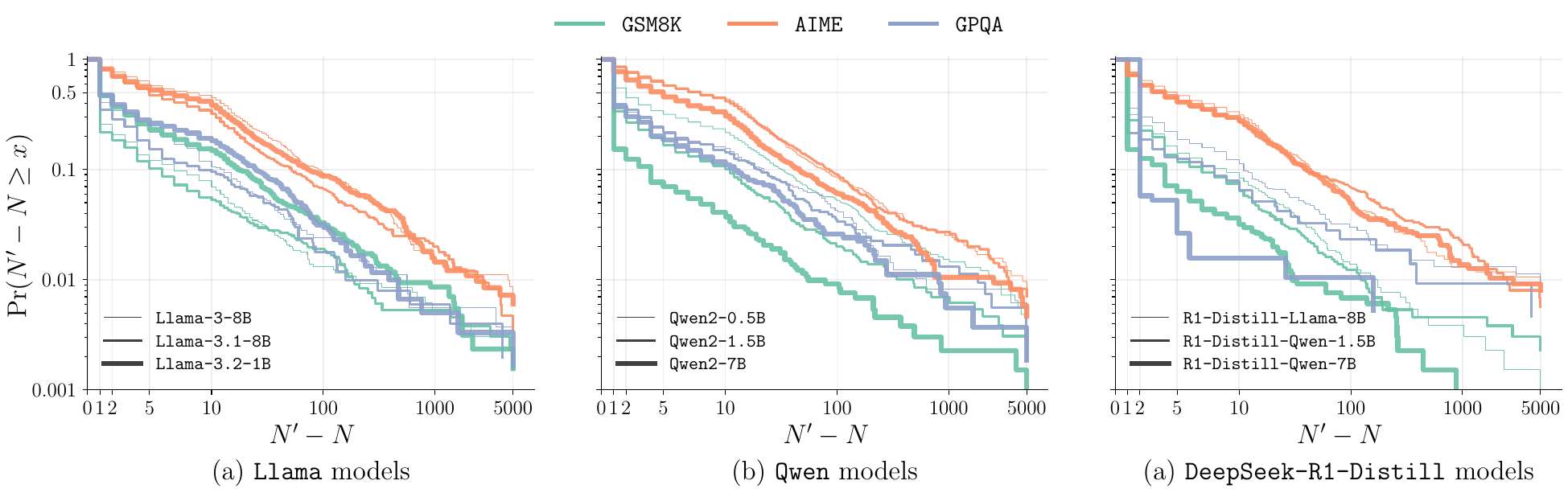}
\caption{\textbf{Distribution of additional answers generated by an unfaithful provider.}  
For queries from \texttt{GSM8K}, \texttt{AIME}, and \texttt{GPQA}, the figure shows the complementary cumulative distribution $\Pr(N'-N \ge x)$ of additional answers $N'-N$ generated by Algorithm~\ref{alg:greedy_lookahead}.
In all experiments, we use the ASC stopping rule (Example~\ref{ex:ASC}) with $\gamma = 0.95$ and set $\alpha = 0.1$.}
\label{fig:excess_samples_by_difficulty}
\end{figure}

\begin{figure}[t]
\centering
\includegraphics[width=\textwidth]{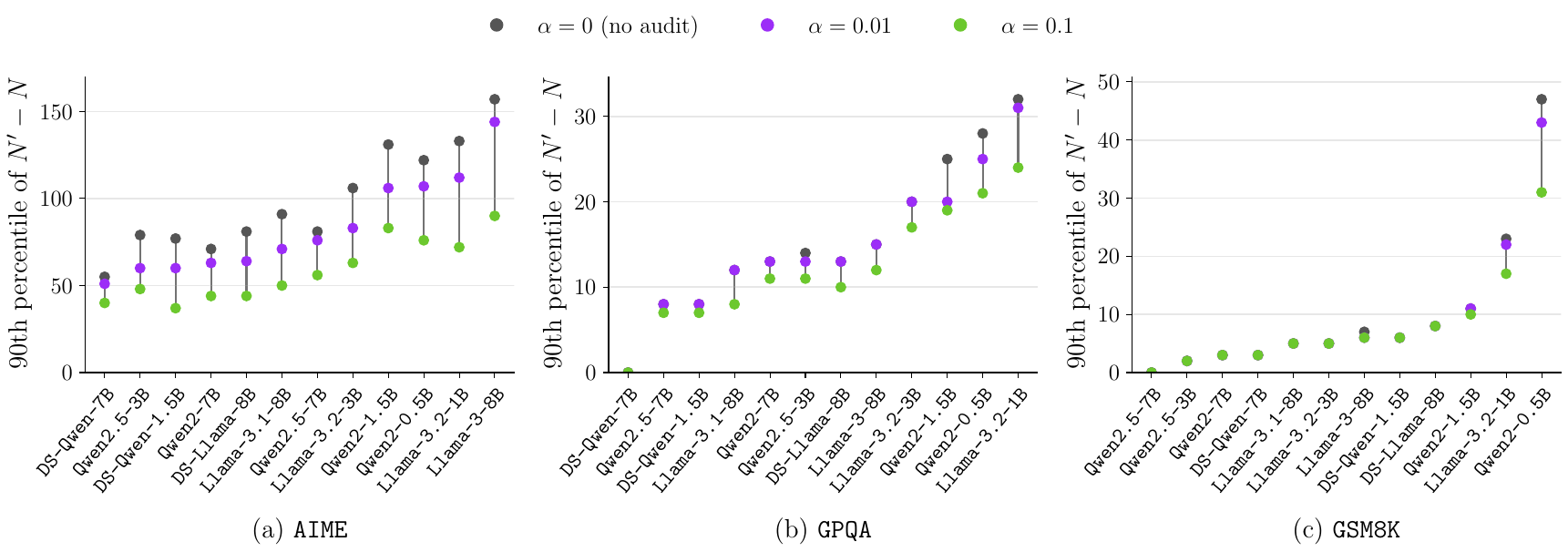}
\caption{\textbf{Influence of the audit threshold on an unfaithful provider's capacity to overcharge.}
For queries from \texttt{GSM8K}, \texttt{AIME}, and \texttt{GPQA}, the figure shows the $90$th percentile of the distribution of additional answers $N'-N$ generated by Algorithm~\ref{alg:greedy_lookahead} for different values of the audit threshold $\alpha$.
In all experiments, we use the ASC stopping rule (Example~\ref{ex:ASC}) with $\gamma = 0.95$.
\texttt{DS} abbreviates \texttt{DeepSeek-R1-Distill}.}
\label{fig:asc_auditing}
\end{figure}

\section{Discussion and Limitations}
\label{sec:discussion}
We discuss several assumptions and limitations of our work and propose avenues for future work.

\xhdr{Methodology}
We have focused on self-consistency, which is widely used for tasks with verifiable, categorical responses. This setting admits a clean statistical interpretation, as stopping rules such as PPR-1v1 offer theoretical guarantees that lend themselves to the analysis in Section~\ref{sec:self_consistency}. In Appendices~\ref{app:beyond-majority} and~\ref{appx:subsec:bon_results}, we discuss how our conclusions transfer to adaptive best-of-$N$ and other test-time compute methods, where stopping rules depend on reward-model scores or other signals. Such stopping rules introduce additional challenges because they typically depend on the correctness of a reward function and assumptions about the reward distribution~\citep{wan2025beacon}.  Developing comparable theoretical bounds on overcharging, as well as  the corresponding audit mechanisms, is an important direction for future work. 

Furthermore, our lower bound on the provider's gain in Proposition~\ref{prop:provable_gain_binary} applies only to admissible stopping rules. This class includes several popular variants, but certain inadmissible stopping rules  nonetheless perform well empirically.
In Appendix~\ref{appx:subsec:inadmissible_stopping_experiments}, we demonstrate that our main conclusions extend to such rules.

\xhdr{Experimental evaluation}
Our experiments cover mathematical reasoning, science, and question-answering tasks. 
It would be valuable to evaluate additional domains, as well as extensions of self-consistency beyond categorical answers. For instance, in coding tasks, agreement is determined by voting on the outputs of generated programs over a set of test cases~\citep{aggarwal2023samplestepbystep}.
Our experiments evaluate ASC in the main text, PPR-1v1 in Appendix~\ref{appx:subsec:synthetic_generation}, and early-stopping self-consistency (ESC)~\citep{li2024escape} in Appendix~\ref{appx:sec:additional_results} as stopping rules. 
Future work could also consider confidence-weighted stopping rules, such as CISC~\citep{taubenfeld-etal-2025-confidence}, and more complex multi-agent test-time compute procedures, which spawn several models and orchestrate their behavior before aggregating their conclusions.\footnote{\url{https://docs.x.ai/developers/model-capabilities/text/multi-agent}.}

\xhdr{Auditing self-consistency}
While we have shown that a provider can evade a powerful single-sequence audit, repeated overcharging may be harder to conceal. An audit can be deployed online, with evidence aggregated across multiple user queries, thereby constraining how frequently or aggressively the provider can misreport the number of reasoning paths (see Appendix~\ref{appx:subsec:auditing_multiple_seqs}). 
Providers could also engage in more sophisticated reordering strategies than the one in Algorithm~\ref{alg:greedy_lookahead}. In response, auditors could test for violations of exchangeability across multiple reported sequences~\citep{vovk2003testingexchangeability, pmlr-v152-vovk21b, dandapanthula2026offline, saha2026distribution}, albeit with less power than the audit considered in our work.

Finally, statistical audits could be complemented by measures that directly attest to faithful execution. For open-weight models, trusted execution environments or zero-knowledge proofs could guarantee users that the provider's computations were carried out faithfully~\citep{9041685, Sun24}. However, such approaches  may be challenging, or even impossible, to implement for proprietary models whose weights and execution details cannot be disclosed.

\xhdr{Reputational and regulatory considerations}
Our results establish the technical feasibility and profiability of inflating the number of reasoning paths charged to a user. 
While reputational or regulatory pressures may deter such behavior, recent controversies over opaque changes to model capabilities suggest that reputational mechanisms alone are insufficient to ensure transparency among LLM service providers.\footnote{\url{https://fortune.com/2026/06/10/anthropic-accu-claude-fable-5-limits-capabilities-ai-researchers-developers/}, \url{https://www.businessinsider.com/researchers-furious-anthropic-mythos-fable-hidden-ai-limits-2026-6}.}


\section{Conclusions}
\label{sec:conclusions}
To what extent can an LLM-as-a-service provider overcharge a user who pays for compute they cannot directly observe?
In our work, we have shown that under self-consistency, the incentive to report an artificially high number of reasoning paths can be substantial. 
Even a provider that commits to a specific stopping rule can generate additional reasoning paths and strategically reorder them so that each appears necessary under the stopping rule---effectively charging the user for computation that was never required. 
Strikingly, an unfaithful provider can carry out this manipulation while guaranteeing that the reported sequence remains statistically indistinguishable from one generated by a faithful provider. As a result, even the most powerful audit cannot reliably detect this deception.
More broadly, our results highlight the risks of opacity in the LLM-as-a-service market and motivate the design of mechanisms that better align providers' incentives with users' interests.

\vspace{2mm}

\xhdr{Acknowledgements} 
Gomez-Rodriguez acknowledges support from the European Research Council (ERC) under the European Union'{}s Horizon 2020 research and innovation programme (grant agreement No. 101169607).

{ 
\small
\bibliographystyle{plainnat}
\bibliography{refs}
}

\clearpage
\newpage

\appendix

\section{Admissibility of Stopping Rules}\label{appx:stopping_rules}

In this section, we show that the stopping rules described in Examples~\ref{ex:ppr-1v1} and~\ref{ex:ASC} of Section \ref{sec:self_consistency} are admissible, \ie, that they satisfy Definition~\ref{def:admissible_count_based_stopping}. We then discuss how the conditions in Definition~\ref{def:admissible_count_based_stopping} apply to other adaptive stopping rules proposed in the literature.

\subsection{Admissibility of PPR-1v1} \label{appx:subsec:ppr-1v1_stopping}

We verify that the PPR-1v1 stopping rule \citep{pac_mode_estimation} satisfies the four conditions of Definition~\ref{def:admissible_count_based_stopping}. Recall that this stopping rule is defined as
\begin{align*}
    \tau_{\text{PPR-1v1}} (s_{(1)}, s_{(2)}) &= \mathbbm{1} \Bigl[ f \Bigl(\frac{1}{2}; s_{(1)}+1, s_{(2)}+1 \Bigr) \leq \frac{\delta}{K-1} \Bigr],
\end{align*}
where $f$ denotes the density of the Beta distribution:
\begin{equation*}
    f \Bigl(\frac{1}{2}; s_{(1)}+1, s_{(2)}+1 \Bigr) = \frac{(1/2)^{s_{(1)}}\cdot \, (1/2)^{s_{(2)}}}{\int_{0}^{1} q^{s_{(1)}} \cdot(1-q)^{s_{(2)}} \, dq} .
\end{equation*}

Thus, a sequence with counts $(\leader, \ru)$ does \emph{not} trigger the stopping rule whenever $f \Bigl(\frac{1}{2}; \leader +1, \ru +1 \Bigr)  > \frac{\delta}{K-1}$.

\begin{itemize}
    \item[i)] Count invariance follows immediately because the stopping decision depends only on the  current leader and runner-up counts, regardless of their identities or the order of preceding samples.
    \item[ii)] Nontriviality, or $\tau_{\text{PPR-1v1}}(1, 0) = 0$, follows from Lemma \ref{lemma:ppr1v1_nontriviality}.
    \item[iii)] Monotonicity in the second most frequent count follows from Lemma \ref{lemma:ppr1v1_runner_up_monotonicity}.
    \item[iv)] Fixed-margin monotonicity follows from Lemma \ref{lemma:ppr1v1_fixed_margin_monotonicity}.
\end{itemize}



\begin{lemma}[Nontriviality of PPR-1v1]
 A single observation is insufficient to trigger PPR-1v1 stopping; that is, $\tau_{\text{PPR-1v1}}(1,0) = 0$.
\label{lemma:ppr1v1_nontriviality}
\end{lemma}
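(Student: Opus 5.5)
Since $\tau_{\text{PPR-1v1}}(1,0)=1$ exactly when $f(1/2;2,1)\le \delta/(K-1)$, the plan is to evaluate $f(\frac12; 2, 1)$ in closed form and compare it with $\delta/(K-1)$. With $s_{(1)}=1$ and $s_{(2)}=0$, the numerator of the expression from Example~\ref{ex:ppr-1v1} is $(1/2)^1\cdot(1/2)^0 = 1/2$. The denominator is $\int_0^1 q\,dq = 1/2$. Hence $f(\frac12;2,1)=1$, which is consistent with the Beta$(2,1)$ density $2q$ evaluated at $q=1/2$.

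It then remains to show $1 > \delta/(K-1)$. This follows because $\delta\in(0,1)$ and $K\ge 2$, so $K-1\ge 1$ and $\delta/(K-1)\le \delta<1$. The stopping condition $f\le \delta/(K-1)$ therefore fails, and $\tau_{\text{PPR-1v1}}(1,0)=0$.

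There is no real obstacle here. The only point needing care is to use the strict inequality $\delta<1$ from the definition of $\delta\in(0,1)$. Without it, the boundary case $\delta=1$, $K=2$ would give equality $f=\delta/(K-1)=1$, and the rule would trigger.
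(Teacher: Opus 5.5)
Your proposal is correct and follows the paper's proof: you show $f(\tfrac12;2,1)=1$ (you compute it directly from the integral, whereas the paper uses the closed-form factorial expression for the Beta density, which is only a cosmetic difference) and then conclude from $\delta/(K-1)\le\delta<1$ that the stopping condition fails. Your remark that the strict inequality $\delta<1$ is what excludes equality in the case $K=2$ is accurate.
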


\begin{proof}[Proof of Lemma \ref{lemma:ppr1v1_nontriviality}]
Fix any $\delta \in (0, 1)$ and $K \ge 2$. We want to show  $\tau_{\text{PPR-1v1}}(1,0) = 0$. Evaluating the Beta density at $(s_{(1)},s_{(2)})=(1,0)$  gives:
\begin{align*}
f\Bigl(\frac{1}{2}; 1 + 1, 0 + 1 \Bigr) &=   2^{-(1 + 0)} (0 + 1 +1) \cdot \frac{(1 + 0)!}{1! (1 + 0 - 1)!} = 1.
\end{align*}

Since $\delta \in (0, 1)$ and $K \ge 2$, we have $\frac{\delta}{K-1} \le  \delta < 1$.  Therefore, $f( \frac{1}{2}; 1 + 1, 0 + 1 ) > \frac{\delta}{K-1}$, so the stopping condition is not satisfied and $\tau_{\text{PPR-1v1}}(1,0) = 0$.
\end{proof}

\begin{lemma}[Monotonicity in the second most frequent count of PPR-1v1]
Suppose that counts $(\leader, \ru)$, with $\leader \ge \ru + 1$, do not trigger PPR-1v1. Then $(\leader, \ru + 1)$ also do not trigger PPR-1v1.  
\label{lemma:ppr1v1_runner_up_monotonicity}
\end{lemma}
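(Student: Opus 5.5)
The plan is to show that the Beta density at $1/2$ does not decrease when the runner-up count grows by one, as long as the leader is strictly ahead. Since the stopping threshold $\delta/(K-1)$ does not change, this immediately keeps the rule from triggering. Write $a=\leader$ and $b=\ru$, and use the closed form
\[
f\Bigl(\tfrac12; a+1, b+1\Bigr) = 2^{-(a+b)}\,(a+b+1)\binom{a+b}{a},
\]
which comes from $\int_0^1 q^{a}(1-q)^{b}\,dq = \frac{a!\,b!}{(a+b+1)!}$. This is the same formula already used in the proof of Lemma~\ref{lemma:ppr1v1_nontriviality}.

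The key step is to compute the ratio of the densities at $(a,b+1)$ and $(a,b)$:
\[
R(a,b) = \frac{f(\tfrac12; a+1, b+2)}{f(\tfrac12; a+1, b+1)} = \frac12\cdot\frac{(a+b+2)!\,/\,(b+1)!}{(a+b+1)!\,/\,b!} = \frac{a+b+2}{2(b+1)}.
\]
This follows by writing $(a+b+1)\binom{a+b}{a} = \frac{(a+b+1)!}{a!\,b!}$, after which the factors of $a!$ cancel. Then $R(a,b)\ge 1$ holds if and only if $a+b+2\ge 2b+2$, that is, if and only if $a\ge b$. The hypothesis $a\ge b+1$ therefore gives $R>1$, which is strictly stronger than needed.

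To conclude, suppose $(a,b)$ does not trigger the rule, so $f(\tfrac12;a+1,b+1)>\delta/(K-1)$. Then
\[
f(\tfrac12;a+1,b+2)\ge f(\tfrac12;a+1,b+1)>\delta/(K-1),
\]
so $\tau_{\text{PPR-1v1}}(a,b+1)=0$. One subtlety needs a remark. After the increment, $(a,b+1)$ must still be an ordered leader/runner-up pair, and $a\ge b+1$ guarantees this. If $a=b+1$, the new counts are tied, the ordering is ambiguous, and the Beta density is symmetric in its two arguments, so the value is unaffected.

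The only real obstacle is bookkeeping: getting the factorial simplification and the index shifts right. Everything else is a direct inequality.
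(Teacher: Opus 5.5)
Your proposal is correct and follows essentially the same route as the paper. Both arguments compute the ratio $f(\tfrac12; \leader+1, \ru+2)/f(\tfrac12; \leader+1, \ru+1) = (\leader+\ru+2)/(2(\ru+1))$ from the factorial closed form, note that it exceeds one because $\leader > \ru$, and conclude that the density stays above $\delta/(K-1)$. Your extra remark on the tie case is harmless but unnecessary, since $(\leader,\leader)$ is already a valid ordered pair.
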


\begin{proof}[Proof of Lemma \ref{lemma:ppr1v1_runner_up_monotonicity}]

Assume that $\tau_{\text{PPR-1v1}} (\leader, \ru) = 0$. To show $\tau_{\text{PPR-1v1}} (\leader, \ru + 1) = 0$, compare the Beta densities \emph{after} and \emph{before} incrementing the second most frequent count:
\begin{align*}
    \frac{f(\frac{1}{2}; \; \leader + 1, \ru + 2)}{f(\frac{1}{2}; \; \leader + 1, \ru + 1)} &= \frac{(\leader+\ru+2) (\leader+\ru+1)!}{2^{(\leader+\ru+1)} \leader!\,(\ru+1)!} \cdot \frac{2^{(\leader+\ru)}\, \leader! \, \ru!}{(\leader+\ru+1)(\leader+\ru)!} \\
    &= \frac{(\leader+\ru+2) (\leader+\ru+1)!}{2 (\ru +1)!} \cdot \frac{ \ru!}{(\leader+\ru+1)!} \\
    &= \frac{(\leader+\ru+2) \ru! }{2 (\ru+1)!} = \frac{\leader+\ru+2}{2 (\ru+1)}.
\end{align*}

Since $\leader > \ru$, we have $\leader + \ru + 2 > 2(\ru+1)$, meaning the ratio above is greater than one.  Furthermore, since $\tau_{\text{PPR-1v1}}(\leader, \ru)=0$, it follows that $f\bigl( \frac{1}{2}; \leader+1, \ru+1 \big) > \frac{\delta}{K-1}$ and:
\begin{align*}
    f\left( \frac{1}{2}; \leader+1, \ru+2 \right) > f\left( \frac{1}{2}; \leader+1, \ru+1 \right) > \frac{\delta}{K-1}.
\end{align*}
Thus, $\tau_{\text{PPR-1v1}} (\leader, \ru + 1) = 0$.
\end{proof}

\begin{lemma}[Fixed-margin monotonicity of PPR-1v1]
Suppose that counts $(\leader,\ru)$ do not trigger PPR-1v1. Then, holding the margin $\leader - \ru$ fixed, incrementing both counts by a common positive integer also does not trigger PPR-1v1.
\label{lemma:ppr1v1_fixed_margin_monotonicity}
\end{lemma}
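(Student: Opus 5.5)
It suffices to show that the Beta density at $1/2$ increases when both counts are incremented by one, i.e., that
\[
f\Bigl(\tfrac{1}{2}; \leader+2, \ru+2\Bigr) \ge f\Bigl(\tfrac{1}{2}; \leader+1, \ru+1\Bigr).
\]
The statement for general $c \in \mathbb{N}^+$ then follows by induction on $c$. Indeed, each step $(\leader+c, \ru+c)\mapsto(\leader+c+1,\ru+c+1)$ is again a single joint increment, and the margin $\leader-\ru$ is preserved. Combining monotonicity of the density along the chain with the hypothesis $f(\tfrac12;\leader+1,\ru+1) > \frac{\delta}{K-1}$ gives that the density stays above $\frac{\delta}{K-1}$. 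Hence $\tau_{\text{PPR-1v1}}(\leader+c,\ru+c)=0$.

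For the one-step comparison I will use the closed form already used in Lemma~\ref{lemma:ppr1v1_runner_up_monotonicity}:
\[
f\Bigl(\tfrac12; a+1, b+1\Bigr) = \frac{(a+b+1)!}{2^{a+b}\, a!\, b!}.
\]
Writing $a=\leader$, $b=\ru$ and taking the ratio, the factorials telescope:
\[
\frac{f(\tfrac12; a+2, b+2)}{f(\tfrac12; a+1, b+1)} = \frac{(a+b+3)(a+b+2)}{4\,(a+1)(b+1)}.
\]
It remains to show that this ratio is at least $1$, i.e., $(a+b+3)(a+b+2) \ge 4(a+1)(b+1)$. By AM--GM, $(a+1)+(b+1) = a+b+2$ gives $4(a+1)(b+1) \le (a+b+2)^2$. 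Since $a+b+3 > a+b+2$, we get
\[
(a+b+3)(a+b+2) > (a+b+2)^2 \ge 4(a+1)(b+1).
\]
So the ratio is strictly greater than one for every pair of counts.

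The only delicate point is obtaining the ratio correctly; that algebra is the main, and rather mild, obstacle. No condition on the margin is needed, unlike in Lemma~\ref{lemma:ppr1v1_runner_up_monotonicity}. Note also that the ordered-count convention $\leader \ge \ru$ is preserved under joint increments, so the rule is evaluated on valid ordered counts throughout the induction.
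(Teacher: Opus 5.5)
Your proposal is correct and matches the paper's proof: the same one-step ratio $\frac{(s_{(1)}+s_{(2)}+3)(s_{(1)}+s_{(2)}+2)}{4(s_{(1)}+1)(s_{(2)}+1)}$, followed by iterating over $c$. The only difference is cosmetic: you show the ratio exceeds one via AM--GM, while the paper expands the numerator minus the denominator as $(s_{(1)}-s_{(2)})^2+s_{(1)}+s_{(2)}+2>0$.
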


\begin{proof}[Proof of Lemma \ref{lemma:ppr1v1_fixed_margin_monotonicity}]
Assume that $\tau_{\text{PPR-1v1}}(\leader, \ru) = 0$ and fix any $c \in \mathbb{N}^+$. 
%
 We first establish the result for $c=1$ by comparing the Beta densities before and after incrementing both counts:
\begin{align*}
  \frac{f \bigl( \frac{1}{2}; \leader +2, \ru+2 \bigr)}{f \bigl( \frac{1}{2}; \leader+1, \ru+1 \bigr)} &= \frac{(\leader+\ru+ 3)!}{2^{2} \, (\leader+1)! \, (\ru+1)!} \cdot \frac{\leader! \, \ru!}{(\leader+\ru+1)!}  \\
  &= \frac{(\leader+\ru+3)(\leader+\ru+2)}{4 (\leader+1)(\ru+1)}. 
\end{align*}

To determine whether this ratio exceeds one, subtract the denominator from the numerator:
\begin{align*}
(\leader+\ru+3)(\leader+\ru+2) - 4 (\leader+1)(\ru+1) &= (\leader - \ru)^2 + \leader + \ru + 2.
\end{align*}

Since $(\leader - \ru)^2 + \leader + \ru + 2 > 0$, we have $f \bigl( \frac{1}{2}; \leader +2, \ru+2 \bigr) > f \bigl( \frac{1}{2}; \leader+1, \ru+1 \bigr)$. 
Iterating the same argument $c$ times gives $f(\frac{1}{2}; \leader + c + 1, \ru + c + 1) >f(\frac{1}{2}; \leader + 1, \ru + 1) >  \frac{\delta}{K-1}$, and therefore $\tau_{\text{PPR-1v1}}(\leader + c, \ru + c) = 0$.

\end{proof}

\subsection{Admissibility of ASC}
We verify that the ASC Beta criterion \citep{aggarwal2023samplestepbystep} with confidence parameter $\gamma \in (3/4, 1)$ satisfies the four conditions of Definition \ref{def:admissible_count_based_stopping}. Recall that the ASC stopping rule is:
\begin{align*}
    \tau_{\text{ASC}}(\leader, \ru) &= \mathbbm{1} \Bigl[\frac{\int_{1/2}^{1} q^{\leader} \cdot (1-q)^{\ru} \, dq}{\int_{0}^{1} q^{\leader}\cdot(1-q)^{\ru} \, dq} \ge \gamma  \Bigr].
\end{align*}

For convenience, define:
\begin{align*}
    H(\leader, \ru) &:= \frac{\int_{1/2}^{1} q^{\leader} \cdot (1-q)^{\ru} \, dq}{\int_{0}^{1} q^{\leader}\cdot(1-q)^{\ru} \, dq}.
\end{align*}

A sequence with counts $(\leader, \ru)$ does \emph{not} trigger the ASC stopping rule whenever $H(\leader, \ru) < \gamma$. For integer counts, the beta-binomial identity gives: 
\begin{align*}
    H(\leader, \ru) &= \text{Pr}\bigl(X_{\leader + \ru + 1} \le \leader \bigr), \qquad \text{ where } X_{\leader + \ru + 1}  \sim \mathrm{Binom}\Bigl(\leader + \ru + 1, \frac{1}{2}\Bigr).
\end{align*}
 
 We use this representation to verify the four admissibility conditions.

\begin{itemize}
    \item[i)] Count invariance follows immediately because $H(\leader, \ru)$, and hence the $\tau_{\text{ASC}}$ stopping decision, depends only on counts $(\leader, \ru)$.
    \item[ii)] Nontriviality, or $\tau_{\text{ASC}}(1, 0) = 0$, follows from Lemma~\ref{lemma:asc_nontriviality}.
    \item[iii)] Monotonicity in the second most frequent count follows from Lemma~\ref{lemma:asc_runner_up_monotonicity}.
    \item[iv)] Fixed-margin monotonicity follows from Lemma~\ref{lemma:asc_fixed_margin_monotonicity}. 
\end{itemize}

\begin{lemma}[Nontriviality of ASC]
For any $\gamma > 3/4$, a single observation is insufficient to trigger the ASC Beta criterion; that is, $\tau_{\text{ASC}}(1,0) = 0$.
\label{lemma:asc_nontriviality}
\end{lemma}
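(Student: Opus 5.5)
The plan is to evaluate $H(1,0)$ directly and compare it with $\gamma$. By definition of $\tau_{\text{ASC}}$, we have $\tau_{\text{ASC}}(1,0)=0$ exactly when $H(1,0)<\gamma$. So it suffices to show $H(1,0)=3/4$; the hypothesis $\gamma>3/4$ then finishes the argument.

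I would compute $H(1,0)$ in two independent ways, as a consistency check.

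First, directly from the integral definition. With $\leader=1$ and $\ru=0$, the numerator is $\int_{1/2}^{1} q\,dq = \tfrac{1}{2}\bigl(1-\tfrac14\bigr)=\tfrac38$. The denominator is $\int_0^1 q\,dq=\tfrac12$. Hence $H(1,0)=\tfrac{3/8}{1/2}=\tfrac34$.

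Second, via the beta-binomial identity stated just above. Here $X_2\sim\mathrm{Binom}(2,1/2)$, and $H(1,0)=\Pr(X_2\le 1)=1-\Pr(X_2=2)=1-\tfrac14=\tfrac34$, which agrees.

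Since $\gamma>3/4=H(1,0)$, the stopping condition $H\ge\gamma$ fails, so $\tau_{\text{ASC}}(1,0)=0$.

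There is no real obstacle here. The only point to be careful about is that the inequality in the rule is non-strict ($\ge\gamma$). This is why the lemma requires the strict bound $\gamma>3/4$ rather than $\gamma\ge 3/4$: at $\gamma=3/4$ the rule would stop after a single observation.
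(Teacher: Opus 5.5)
Your proof is correct and follows the paper's argument: both compute $H(1,0)=\frac{\int_{1/2}^{1} q\,dq}{\int_0^1 q\,dq}=\frac{3}{4}$ and use $\gamma>3/4$ to conclude that the non-strict stopping condition $H\ge\gamma$ fails. Your beta-binomial cross-check and your remark that $\gamma=3/4$ must be excluded are both correct, but the argument does not need them.
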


\begin{proof}[Proof of Lemma \ref{lemma:asc_nontriviality}]
For $\leader=1$ and $\ru = 0$, 
\begin{align*}
    H(1, 0) &=  \frac{\int_{1/2}^1 q \; dq}{\int_{0}^{1} q \; dq } = \frac{3}{4}.
\end{align*}
Therefore, whenever $\gamma > \frac{3}{4}$, we have $H(1, 0) < \gamma$, and hence  $\tau_{\text{ASC}}(1,0) = 0$.
\end{proof}

\begin{lemma}[Monotonicity in the second most frequent count of ASC]
Suppose that counts $(\leader, \ru)$, with $\leader \ge \ru + 1$, do not trigger the ASC stopping rule. Then $(\leader, \ru + 1)$ also do not trigger the ASC stopping rule.  
\label{lemma:asc_runner_up_monotonicity}
\end{lemma}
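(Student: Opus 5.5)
The plan is to work with the binomial representation $H(\leader,\ru)=\Pr(X_{n+1}\le \leader)$, where $X_{n+1}\sim\mathrm{Binom}(n+1,1/2)$ and $n=\leader+\ru$. Since $\tau_{\text{ASC}}(\leader,\ru)=0$ means exactly $H(\leader,\ru)<\gamma$, it suffices to prove the pointwise inequality
\begin{equation*}
H(\leader,\ru+1)\le H(\leader,\ru) \qquad \text{whenever } \leader\ge \ru+1.
\end{equation*}
Then $H(\leader,\ru+1)<\gamma$, so $\tau_{\text{ASC}}(\leader,\ru+1)=0$. Note that the hypothesis $\leader\ge\ru+1$ is only needed so that $(\leader,\ru+1)$ is still a valid ordered pair; the inequality itself should hold regardless.

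To compare the two quantities, I would couple the binomials by writing $X_{n+2}=X_{n+1}+B$, where $B\sim\mathrm{Bernoulli}(1/2)$ is independent of $X_{n+1}$. The event $\{X_{n+2}\le \leader\}$ is contained in $\{X_{n+1}\le \leader\}$, because $X_{n+2}\ge X_{n+1}$. This gives
\begin{equation*}
H(\leader,\ru+1)=\Pr(X_{n+2}\le\leader)\le \Pr(X_{n+1}\le\leader)=H(\leader,\ru).
\end{equation*}
More precisely, the difference equals $\tfrac12\Pr(X_{n+1}=\leader)>0$, so the inequality is in fact strict.

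As a sanity check that avoids the binomial identity, the same conclusion follows from the integral form. Incrementing $\ru$ multiplies the integrand $q^{\leader}(1-q)^{\ru}$ by $(1-q)$. This factor is smaller on $[1/2,1]$ than on $[0,1/2]$, so it shifts mass of the normalized Beta density toward $[0,1/2]$. Formally, $\mathrm{Beta}(\leader+1,\ru+2)$ is stochastically smaller than $\mathrm{Beta}(\leader+1,\ru+1)$ because their density ratio is monotone (the likelihood ratio order), so the upper tail above $1/2$ decreases.

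The only delicate point is justifying the beta–binomial identity for integer counts. The paper already states this identity, so I would take it as given; the argument then reduces to the one-line coupling.
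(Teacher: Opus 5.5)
Your proposal is correct and follows essentially the same route as the paper, which likewise uses the beta--binomial representation, couples $X_{m+1} = X_m + B$ with an independent $B \sim \mathrm{Bernoulli}(1/2)$, and obtains $H(\leader,\ru+1) = H(\leader,\ru) - \tfrac{1}{2}\Pr(X_m = \leader) < H(\leader,\ru) < \gamma$. Your likelihood-ratio-order check via the integral form is an additional valid argument that the paper does not give, and it has the advantage of not relying on the beta--binomial identity.
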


\begin{proof}[Proof of Lemma \ref{lemma:asc_runner_up_monotonicity}]
Assume that $\tau_{\text{ASC}} (\leader, \ru) = 0$, and let $m = \leader + \ru + 1$. Using the beta-binomial representation, $H(\leader, \ru) = \text{Pr}(X_m \le \leader)$. After increasing the runner-up count by one, $H(\leader, \ru + 1) = \text{Pr}(X_{m + 1} \le \leader)$. Write $X_{m+1} = X_{m} + B$, where $B \sim \mathrm{Bernoulli}(\frac{1}{2})$ is independent of $X_m$. Then:
\begin{align*}
    H(\leader, \ru + 1) &= \text{Pr}(X_m + B \le \leader)  \\
    &= \text{Pr} (X_m \le \leader - 1) +  \frac{1}{2} \text{Pr}(X_m = \leader) \\
    &= \text{Pr} (X_m \le \leader ) -  \frac{1}{2} \text{Pr}(X_m = \leader) \\
    &< H(\leader, \ru).
\end{align*}

Thus, increasing the runner-up count strictly decreases the ASC confidence in the empirical leader. From our assumption that $\tau_{\text{ASC}} (\leader, \ru) = 0$, we know $H(\leader, \ru) < \gamma$ and therefore $H(\leader, \ru + 1) < H(\leader, \ru) < \gamma$. This shows  $\tau_{\text{ASC}} (\leader, \ru + 1) = 0$.
\end{proof}

\begin{lemma}[Fixed-margin monotonicity of ASC]
Suppose that counts $(\leader,\ru)$ do not trigger the ASC stopping rule with $\gamma > 3/4$. Then, holding $\leader - \ru$ fixed, incrementing both counts by a common positive integer also does not trigger the ASC stopping rule.
\label{lemma:asc_fixed_margin_monotonicity}
\end{lemma}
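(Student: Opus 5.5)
The plan is to use the beta-binomial representation $H(\leader,\ru)=\Pr(X_{n}\le \leader)$ with $n=\leader+\ru+1$ and $X_n\sim\mathrm{Binom}(n,\tfrac12)$, as in the proof of Lemma~\ref{lemma:asc_runner_up_monotonicity}. We then show that adding one observation to \emph{both} counts never increases $H$. I would first treat $c=1$ and then iterate. This is legitimate because the margin $\leader-\ru$ is unchanged by the increment, so the ordering $\leader\ge\ru$ (which holds since these are the ordered leader and runner-up counts) is preserved at every step.

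For $c=1$, the new trial count is $n+2$ and the new threshold is $\leader+1$. Write $X_{n+2}=X_n+B_1+B_2$, where $B_1,B_2\sim\mathrm{Bernoulli}(\tfrac12)$ are independent of $X_n$. Conditioning on $X_n$ gives
\begin{align*}
H(\leader+1,\ru+1) &= \Pr(X_n\le \leader-1) + \tfrac34\Pr(X_n=\leader) + \tfrac14\Pr(X_n=\leader+1)\\
&= H(\leader,\ru) - \tfrac14\bigl[\Pr(X_n=\leader)-\Pr(X_n=\leader+1)\bigr].
\end{align*}
It therefore suffices to show that $\Pr(X_n=\leader+1)\le\Pr(X_n=\leader)$.

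That inequality is the only real step. For the symmetric binomial, the ratio of consecutive probabilities is
\[
\frac{\Pr(X_n=\leader+1)}{\Pr(X_n=\leader)}=\frac{n-\leader}{\leader+1}=\frac{\ru+1}{\leader+1}\le 1,
\]
since $\leader\ge\ru$. Equality holds exactly when $\leader=\ru$, and in that case $H$ is unchanged. Hence $H(\leader+1,\ru+1)\le H(\leader,\ru)<\gamma$, so the stopping rule does not trigger.

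Applying the same argument to $(\leader+j,\ru+j)$ for $j=1,\dots,c-1$, each of which still has leader count at least runner-up count, gives
\[
H(\leader+c,\ru+c)\le H(\leader,\ru)<\gamma,
\]
that is, $\tau_{\text{ASC}}(\leader+c,\ru+c)=0$. The condition $\gamma>3/4$ is not needed for this monotonicity step; it only enters through nontriviality (Lemma~\ref{lemma:asc_nontriviality}). I do not expect a genuine obstacle here. The one point needing care is the conditioning bookkeeping, in particular that $\Pr(B_1+B_2\le 1)=3/4$ multiplies the mass at $\leader$ and $\Pr(B_1+B_2=0)=1/4$ multiplies the mass at $\leader+1$.
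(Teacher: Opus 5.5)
Your proposal is correct and follows the paper's proof essentially step for step: the same beta-binomial representation with $X_{n+2}=X_n+B_1+B_2$, the same conditioning identity $H(s_{(1)}+1,s_{(2)}+1)=H(s_{(1)},s_{(2)})-\tfrac14\bigl[\Pr(X_n=s_{(1)})-\Pr(X_n=s_{(1)}+1)\bigr]$, the same consecutive-probability ratio $(s_{(2)}+1)/(s_{(1)}+1)\le 1$, and iteration over $c$. Your side remarks are also accurate: the fixed margin keeps $s_{(1)}\ge s_{(2)}$ at every iteration, and $\gamma>3/4$ plays no role in this step.
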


\begin{proof}[Proof of Lemma \ref{lemma:asc_fixed_margin_monotonicity}]
Assume that $\tau_{\text{ASC}}(\leader, \ru) = 0$ and fix any $c \in \mathbb{N}^+$.Let $m = \leader + \ru + 1$ and $X_m \sim \mathrm{Binom}(m, \frac{1}{2})$ so that $H(\leader, \ru) = \text{Pr}(X_m \le \leader)$. After increasing both counts by one, we have $H(\leader + 1, \ru + 1) = \text{Pr}(X_{m + 2} \le \leader + 1)$. Write $X_{m + 2} = X_m + B_1 + B_2$, where $B_1, B_2$ are independent $\mathrm{Bernoulli}(\frac{1}{2})$ random variables. Conditioning on $X_m$,
\begin{align*}
    H(\leader + 1, \ru + 1) &= \text{Pr}(X_m \le \leader - 1) + \frac{3}{4} \Pr(X_m = \leader) + \frac{1}{4} \Pr(X_m  = \leader + 1) \\
    &= H(\leader, \ru) + \frac{1}{4} \Bigl[\text{Pr}(X_m = \leader + 1) - \Pr(X_m = \leader) \Bigr].
\end{align*}
For $X_m \sim \mathrm{Binom}(m, \frac{1}{2})$ and $\ru \leq \leader$,
\begin{align*}
\frac{\text{Pr}(X_m = \leader + 1)}{\text{Pr}(X_m = \leader)} &= \frac{m - \leader}{\leader + 1} = \frac{\ru + 1}{\leader + 1} \leq 1.
\end{align*}
Therefore, $\text{Pr}(X_m = \leader + 1) \le \text{Pr}(X_m = \leader)$ which implies $H(\leader + 1, \ru + 1) \le H(\leader, \ru)$. Iterating the same argument $c$ times gives $H(\leader + c, \ru + c) \leq H(\leader, \ru)$ for any $c \in \mathbb{N}^+$. From our assumption that $\tau_{\text{ASC}}(\leader, \ru) = 0$, we have $H(\leader, \ru) < \gamma$. Thus,  $H(\leader + c, \ru + c) \le H(\leader, \ru) < \gamma$. This shows $\tau_{\text{ASC}}(\leader + c, \ru + c) = 0$, as required. 
\end{proof}

\subsection{Additional Self-Consistency Stopping Rules}\label{app:additional_rules}

The admissibility conditions in Definition~\ref{def:admissible_count_based_stopping} characterize stopping rules that depend only on the counts of the empirical mode and runner-up and satisfy natural monotonicity properties. Here, we briefly review other adaptive self-consistency methods from the literature, many of which use information beyond the top-two counts and are therefore not admissible. For example, {Early-Stopping Self-Consistency} (ESC)~\citep{li2024escape} stops when all responses within a fixed recent window agree. Since stopping then depends only on which responses appear in the most recent window rather than on cumulative answer counts, ESC violates count invariance and is not admissible.
Other methods leverage additional information beyond the answer counts. {Reliability-Aware Adaptive Self-Consistency} (ReASC)~\citep{kim2026reasc} augments response counts with the model's confidence in its generated responses, as measured by the token probabilities assigned to each reasoning path. {Reasoning-Aware Self-Consistency} (RASC) ~\citep{wan2025reasoning} instead uses a learned network to extract features from individual reasoning paths and estimate their quality. Similarly, ~\citet{huang2026optimal} augment the top two response counts with an informative prior over query difficulty, estimated from past model generations on similar queries.

Beyond determining the number of samples for an individual query, \citet{feng2026optimal} propose BlendASC, which combines adaptive stopping with a shared inference budget across a \emph{set} of queries and allocates samples according to their relative uncertainty. Whether a particular query receives another sample therefore depends not only on its own response counts but also on the states of the other queries competing for the shared budget. Consequently, BlendASC cannot in general be represented by a fixed per-query rule $\tau(s_{(1)}, s_{(2)})$ and is not admissible. Similarly, {Difficulty-Adaptive Self-Consistency} (DSC)~\citep{wang2025dsc} uses estimates of query difficulty across a set of queries to allocate compute.

\subsection{Extensions Beyond Self-Consistency}\label{app:beyond-majority}

Although our main analysis focuses on self-consistency, Algorithm~\ref{alg:greedy_lookahead} exploits features shared by a broader class of sequential test-time compute methods.

\xhdr{Adaptive best-of-$N$} Best-of-$N$~\citep{ chow2024inference,huang2025is}  generates $N$ independent reasoning paths, scores each one using an auxiliary scoring model, and returns the highest-scoring  path as the final response. In this context, adaptive stopping rules can similarly decide how many reasoning paths should be generated. 
The simplest example is a threshold rule that stops once the provider observes a candidate whose reward, assigned by a reward model $R$, exceeds some target $r^*$:
\begin{align*}
    \tau(y_1, \ldots, y_n) &= \mathbbm{1} \left[ \max_{i \le n} R (y_i) \ge r^* \right]. 
\end{align*}
More sophisticated adaptive best-of-$N$ stopping rules may depend not only on the highest observed reward, but also on the empirical reward distribution, posterior uncertainty, or the expected value of drawing an additional sample ~\citep{wan2025beacon, raman2026adabon}. Nevertheless, as we illustrate empirically in Appendix~\ref{appx:subsec:bon_results}, an unfaithful provider can adapt the logic of Algorithm~\ref{alg:greedy_lookahead} to reorder the sequence of answers, and hence the associated sequence of rewards, so that additional paths appear necessary before the stopping rule is triggered.

\xhdr{Sequential search-then-verify procedures} Search-then-verify methods do not draw independent samples, but maintain a partial solution that is repeatedly expanded with additional candidates, scored by a verifier, and pruned. The procedure stops once the search finds a candidate accepted by the verifier  or further search appears unpromising~\citep{yao2023treeofthoughts, koh2025tree, dalal2026}. 
These search steps are history-dependent, unlike the independent answer draws of self-consistency and best-of-$N$. However, we conjecture that the logic underlying Algorithm~\ref{alg:greedy_lookahead} could be extended to generate additional search steps and reorder so that the resulting compute appears necessary. We leave empirical evaluation of this extension to future work.


\clearpage
\newpage
\section{Deferred Proofs for Section~\ref{sec:protocol}} \label{appx:attack_proofs}

This section establishes the theoretical guarantees underlying Algorithm~\ref{alg:greedy_lookahead}. Appendix~\ref{appx:subsec_correctness_continuation_attack} first proves that every sequence returned by  Algorithm~\ref{alg:greedy_lookahead} is compatible with the provider's stopping rule. Appendix~\ref{appx:subsec:count_margin_at_stopping} then derives two consequences of admissibility that characterize a sequence at faithful stopping. Finally, Appendix~\ref{appx:subsec:lower_bound_gain} derives a lower bound on the number of additional reasoning paths generated by Algorithm~\ref{alg:greedy_lookahead}.

\subsection{Proposition~\ref{prop:one_step_correctness}}\label{appx:subsec_correctness_continuation_attack}

\begin{proof}[Proof of Proposition~\ref{prop:one_step_correctness}]
For each $T \geq N$, let $P(T)$ be the statement that whenever the current sequence $\y^{(T)}$ maintained by Algorithm~\ref{alg:greedy_lookahead} has length $T$, no prefix of length $j < T$ triggers the stopping rule $\tau$, and the current value of $\texttt{LastCompatible}$ is compatible with $\tau$ (Definition~\ref{def:compatible-sequence}). We prove by induction that $P(T)$ holds throughout the execution of Algorithm~\ref{alg:greedy_lookahead}.

\xhdr{Base case} At $T = N$, the sequence $\y^{(N)}$ is the faithfully stopped sequence $\y$, which is compatible with $\tau$ by assumption. Hence, $N$ is the first length at which $\tau$ triggers, so no prefix of length $j < N$ triggers $\tau$. Furthermore, $\texttt{LastCompatible}$ is initialized to $\y = \y^{(N)}$, which is compatible with $\tau$. Thus, $P(N)$ holds.

\xhdr{Inductive step} Suppose $P(t)$ holds for some $t \geq N$, and suppose Algorithm~\ref{alg:greedy_lookahead} continues to a sequence of length $t + 1$. Write $\y^{(t)} = (y_1, \ldots, y_t)$ for the current sequence, and let $y_{\mathrm{new}}$ denote the answer from a newly generated reasoning path. The algorithm constructs a candidate sequence $\y''$ in one of two ways. If $\y^{(t)}$ does not trigger $\tau$, it appends the new answer, giving $\y'' = (y_1, \ldots, y_t, y_{\mathrm{new}})$. If $\y^{(t)}$ triggers $\tau$, the algorithm first sets $\texttt{LastCompatible}$ to $\y^{(t)}$---which is compatible with $\tau$, since it triggers $\tau$ while, by the inductive hypothesis, no shorter prefix does---and then inserts the new answer before the final one, giving $\y'' = (y_1, \ldots, y_{t-1}, y_{\mathrm{new}}, y_t)$. In either case, $\texttt{LastCompatible}$ remains compatible with $\tau$ because it is unchanged and compatible by the inductive hypothesis in the first and updated to the compatible sequence $\y^{(t)}$ in the second. We now consider the two possible outcomes of the iteration.
    \begin{enumerate}
        \item \emph{Termination.} The algorithm returns the current sequence $\y^{(t)}$ when the new answer cannot be deferred, either because $y_{\mathrm{new}} = y_t$ or because the reordered prefix $(y_1, \ldots, y_{t-1}, y_{\mathrm{new}})$ triggers $\tau$. This can occur only when $\y^{(t)}$ triggers $\tau$. By the inductive hypothesis, no shorter prefix does, so $\y^{(t)}$ is compatible with $\tau$. Alternatively, if the audit flags $\y''$, the algorithm returns $\texttt{LastCompatible}$, which is compatible. Thus, in either case, the returned sequence is compatible with $\tau$.
        \item \emph{Continuation.} Suppose the algorithm accepts $\y''$ and continues with a sequence of length $t + 1$. Every prefix of $\y''$ of length at most $t - 1$ coincides with a prefix of $\y^{(t)}$ and hence does not trigger $\tau$ by the inductive hypothesis. The prefix of length $t$ equals $\y^{(t)}$ when the new answer is appended, in which case it does not trigger by construction. When the new answer is reordered, the prefix of length $t$ is $(y_1,\ldots,y_{t-1},y_{\mathrm{new}})$, which does not trigger by the check in Algorithm~\ref{alg:greedy_lookahead}. Therefore, no prefix of $\y''$ of length $j < t + 1$ triggers $\tau$. Since $\texttt{LastCompatible}$ also remains compatible, $P(t + 1)$ holds.
    \end{enumerate}

Whenever the algorithm continues it preserves $P(T)$, and whenever it terminates, it returns a sequence that is already compatible. Thus, the sequence returned by Algorithm~\ref{alg:greedy_lookahead} is compatible with $\tau$.
\end{proof}
\clearpage
\newpage

\subsection{Minimum Count Margin at Stopping}\label{appx:subsec:count_margin_at_stopping}

We next derive two properties of admissible stopping rules that are useful for the lower-bound analysis. First, an admissible stopping rule can only trigger when the most frequent answer count leads the runner-up by at least two. Second, the answer that triggers stopping must be the empirical mode.

\begin{proposition}[Minimum stopping margin]\label{prop:top_two_margin}
Let $\tau$ be an admissible stopping rule satisfying Definition~\ref{def:admissible_count_based_stopping}. If $\tau(s_{(1)}, s_{(2)}) = 1$, then $s_{(1)} - s_{(2)} \geq 2$. Furthermore, if $\y = (y_1, \ldots, y_N)$ is compatible with $\tau$, then $y_N = a^N_{(1)}$, where $a^N_{(1)}, \ldots, a^N_{(K)}$ denote the answers ordered by their counts in $\y$.
\end{proposition}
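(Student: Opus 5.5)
The plan is to prove the first claim by contraposition: we show that $\tau(s_{(1)}, s_{(2)}) = 0$ whenever $s_{(1)} - s_{(2)} \in \{0, 1\}$. The two margins are handled separately, each starting from nontriviality $\tau(1,0)=0$ (Definition~\ref{def:admissible_count_based_stopping}(ii)).

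\textbf{Margin one.} Suppose $s_{(1)} = s_{(2)} + 1$. Apply fixed-margin monotonicity (iv) to $(1,0)$ with $c = s_{(2)}$. This gives $\tau(s_{(2)}+1, s_{(2)}) = 0$ whenever $s_{(2)} \ge 1$, and the case $s_{(2)} = 0$ is nontriviality itself.

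\textbf{Margin zero.} Suppose $s_{(1)} = s_{(2)} = s$.
\begin{itemize}
\item If $s \ge 1$, we have just shown $\tau(s, s-1) = 0$. Since $s \ge (s-1)+1$, monotonicity in the second count (iii) gives $\tau(s,s) = 0$.
\item If $s = 0$, we have the empty sequence (or all counts zero). The value $\tau(0,0)$ is not directly covered by the axioms. However, the convention that no prefix of length zero is tested in Definition~\ref{def:compatible-sequence} means this case never matters for compatible sequences. Alternatively, (iii) applied to $(1,0)$ yields $\tau(1,1)=0$, and I would simply note that the degenerate empty case is excluded.
\end{itemize}
The only real care needed here is this edge case and checking that (iii)'s hypothesis $s_{(1)} \ge s_{(2)}+1$ holds where it is used.

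\textbf{Second claim.} Let $\y$ be compatible, so $\tau$ evaluated on $\y$ equals $1$ while $\tau$ evaluated on $(y_1,\dots,y_{N-1})$ equals $0$. By the first part, the counts of $\y$ satisfy $s_{(1)}^N - s_{(2)}^N \ge 2$, so the mode $a^N_{(1)}$ is unique. Suppose for contradiction that $y_N = a \neq a^N_{(1)}$. Removing $y_N$ decreases only the count of $a$, so the count of $a^N_{(1)}$ is unchanged. The count of $a$ was at most $s_{(2)}^N$ and becomes at most $s_{(2)}^N - 1$, while every other count stays at most $s_{(2)}^N$. Hence the prefix of length $N-1$ has top count $s_{(1)}^N$ and second count either $s_{(2)}^N$ or $s_{(2)}^N - 1$.

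\textbf{Main obstacle.} The difficulty is the subcase where the second count decreases. The argument needs to be framed in the right direction:
\begin{itemize}
\item The prefix has counts $(s_{(1)}^N, t)$ with $\tau = 0$, where $t \in \{s_{(2)}^N, s_{(2)}^N - 1\}$. Because $s_{(1)}^N \ge s_{(2)}^N + 2$, the hypothesis $s_{(1)} \ge t+1$ of (iii) holds.
\item Applying (iii) to $\tau(s_{(1)}^N, t) = 0$ gives $\tau(s_{(1)}^N, t+1) = 0$ when $t+1 \le s_{(2)}^N$, and iterating reaches $\tau(s_{(1)}^N, s_{(2)}^N) = 0$.
\item If $t = s_{(2)}^N$, the prefix's counts already equal those of $\y$, giving $\tau = 0$ directly.
\end{itemize}
Either way this contradicts $\tau(\y)=1$, so $y_N = a^N_{(1)}$.
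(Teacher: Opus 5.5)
Your proof is correct and follows the paper's argument essentially step for step: nontriviality plus fixed-margin monotonicity rules out margin one, and monotonicity in the runner-up count then rules out ties. The second claim uses the same case split on whether removing $y_N$ leaves the runner-up count at $s_{(2)}$ or lowers it to $s_{(2)}-1$, with (iii) closing the latter case. Your remark that $\tau(0,0)$ is not constrained by the axioms is a point the paper's proof passes over silently, and it is harmless for compatible sequences of positive length.
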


\begin{proof}[Proof of Proposition~\ref{prop:top_two_margin}]
We first show that $\tau$ cannot trigger when the margin $s_{(1)} - s_{(2)}$ is zero or one. By nontriviality, $\tau(1, 0) = 0$. Fixed-margin monotonicity then implies $\tau(b + 1, b) = 0$ for any $b \geq 0$. Thus, $\tau$ cannot trigger when $s_{(1)} - s_{(2)} = 1$.  Next, consider a tie. Since $\tau(b + 1, b) = 0$ for any $b \geq 0$, monotonicity in the second most frequent count implies $\tau(b + 1, b + 1) = 0$. Thus, the rule cannot trigger when the two most frequent counts are equal either. Since $s_{(1)} \ge s_{(2)}$ by definition, any triggering configuration must satisfy $s_{(1)} - s_{(2)} \ge 2$.

It remains to show that the triggering answer $y_N$ must be the most frequent answer $a^N_{(1)}$. Suppose for contradiction that $y_N \neq a^N_{(1)}$. Removing $y_N$ leaves the count $s_{(1)}$ of $a^N_{(1)}$ unchanged, while the second most frequent count either (i) remains at $s_{(2)}$ or (ii) becomes $s_{(2)} - 1$. In case (i), the preceding prefix of length $N - 1$ already has counts top-two counts $(s_{(1)}, s_{(2)})$. By count invariance, it already triggers $\tau$, contradicting that $N$ is the first stopping time. In case (ii), the preceding prefix has counts $(s_{(1)}, s_{(2)} - 1)$. If $\tau(s_{(1)}, s_{(2)} - 1) = 0$, then monotonicity in the second most frequent count implies $\tau(s_{(1)}, s_{(2)}) = 0$, contradicting the assumption that $\tau(s_{(1)}, s_{(2)}) = 1$. Otherwise, $\tau(s_{(1)}, s_{(2)} - 1) = 1$, which again contradicts that $N$ is the first stopping time. Hence, $y_N = a^N_{(1)}$.
\end{proof}

\clearpage
\newpage


\subsection{Lower Bound on Additional Answers}\label{appx:subsec:lower_bound_gain}

In this section, we lower-bound the expected number of additional answers $\mathbb{E}[N' - N]$ reported by an unfaithful provider  running Algorithm~\ref{alg:greedy_lookahead} beyond the faithful stopping time, under no audit ($\alpha = 0$). In particular, we prove Proposition~\ref{prop:provable_gain_binary}.

\begin{proof}[Proof of Proposition~\ref{prop:provable_gain_binary}]
Fix a sequence $\y = (y_1, \ldots, y_N)$ compatible with an admissible stopping rule $\tau$, and write $a^N_{(1)}, \ldots, a^N_{(K)}$ for the answers ordered by their counts in $\y$, with corresponding model probabilities $p_{(1)}, \ldots, p_{(K)}$. Let $\mathbf{Z} = (Z_1, Z_2, \ldots)$ denote the sequence of i.i.d.\ answers that Algorithm~\ref{alg:greedy_lookahead} draws from the model during continuation. Conditional on $\y$, $N'$ is a deterministic function of $\mathbf{Z}$, and all expectations below are taken over $\mathbf{Z}$ conditional on $\y$.

By Proposition~\ref{prop:top_two_margin}, the triggering answer is $y_N = a^N_{(1)}$. Removing this answer leaves the prefix $(y_1, \ldots, y_{N-1})$, whose two most frequent answers are still $a^N_{(1)}$ and $a^N_{(2)}$. Define their count margin as:
\begin{equation*}
    d := s^{N-1}_{(1)} - s^{N-1}_{(2)} = s^{N}_{(1)} - s^{N}_{(2)} - 1 \ge 1,
\end{equation*}
where the inequality follows from the minimum stopping margin of Proposition~\ref{prop:top_two_margin}.
We begin by expressing the expected number of additional answers $\mathbb{E}[N'-N]$ using the tail-sum formula,
\begin{equation*}
    \mathbb{E}[N' - N] = \sum_{k=1}^\infty \mathbb{P}(N' - N \ge k),
\end{equation*}
and lower-bound each term by restricting attention to executions of Algorithm~\ref{alg:greedy_lookahead} that provably do not terminate.
For each $k \ge 1$, let $\mathcal{E}_k$ be the event that $(Z_1, \ldots, Z_k) \in \{a^N_{(1)}, a^N_{(2)}\}^k$ and
\begin{equation}\label{eq:safe_region}
    0 \le \sum_{t=1}^{j} \big(\mathds{1}\{Z_t = a^N_{(2)}\} - \mathds{1}\{Z_t = a^N_{(1)}\}\big) \le d \quad \text{for all } j \le k.
\end{equation}
Condition~\eqref{eq:safe_region} ensures that  the margin of $a^N_{(1)}$ over $a^N_{(2)}$ never exceeds its value  $d$ from the non-triggering prefix, while simultaneously keeping $a^N_{(1)}$ as the most frequent answer. Thus, on $\mathcal{E}_k$,  $a^N_{(1)}$ remains the most frequent answer throughout the first $k$ continuation draws, and the difference between the counts of the most and second-most frequent answers lies in $[0, d]$ for every $j \le k$.

By monotonicity in the second most frequent count and fixed-margin monotonicity (Definition~\ref{def:admissible_count_based_stopping}), none of these intermediate prefixes triggers $\tau$. Algorithm~\ref{alg:greedy_lookahead} therefore incorporates $Z_1, \ldots, Z_k$ without terminating, implying $\mathcal{E}_k \subseteq \{N' - N \ge k\}$. Consequently, 
\begin{equation*}
    \mathbb{E}[N' - N] \ge \sum_{k=1}^\infty \mathbb{P}(\mathcal{E}_k).
\end{equation*}

To bound the above sum, observe that each sequence $(z_1, \ldots, z_k) \in \{a^N_{(1)}, a^N_{(2)}\}^k$ can be viewed as a path on $\{0, 1, \ldots, d\}$ that starts at $0$, steps right on $a^N_{(2)}$, and steps left on $a^N_{(1)}$. The event $\mathcal{E}_k$ requires the path to stay within $\{0, \ldots, d\}$, which we call a \emph{safe} path. Grouping the probability-weighted safe paths by their endpoint $s$ gives:
\begin{equation}\label{eq:g_bound}
    \mathbb{E}[N' - N] \ge \sum_{s=0}^{d} g(s) - 1,
\end{equation}
where $g(s)$ denotes the total probability mass of all safe paths ending at $s$, with $g(0)$ including the empty path; the $-1$ removes the contribution of this empty path. A safe path ending at $s$ must arise either by extending a safe path ending at $s - 1$ by an $a^N_{(2)}$-step or by extending a safe path ending at $s + 1$ by an $a^N_{(1)}$-step. Hence, $g$ solves the linear system:
\begin{equation}\label{eq:recursion_bound}
    \begin{dcases}
        g(0) = 1 + p_{(1)} \cdot g(1),\\
        g(s) = p_{(2)} \cdot g(s - 1) + p_{(1)} \cdot g(s + 1), & s = 1, \ldots, d - 1,\\
        g(d) = p_{(2)} \cdot g(d - 1).
    \end{dcases}
\end{equation}
The coefficient matrix in Eq.~\eqref{eq:recursion_bound} is Toeplitz and, since $p_{(1)} + p_{(2)} \le 1$, nonsingular. To solve the system,  define a sequence $(\xi_j)_{j \ge -1}$ by $\xi_{-1} = 0$, $\xi_0 = \xi_1 = 1$, and
\begin{equation}\label{eq:auxiliary_recurrence}
    \xi_j = \xi_{j-1} - p_{(1)} p_{(2)} \cdot \xi_{j-2}, \quad j \ge 2.
\end{equation}
We claim that $g(s) = p_{(2)}^{\,s} \cdot \xi_{d-s} / \xi_{d+1}$ solves Eq.~\eqref{eq:recursion_bound}. Rearranging Eq.~\eqref{eq:auxiliary_recurrence} as $\xi_{m} = \xi_{m+1} + p_{(1)} p_{(2)}\, \xi_{m-1}$, we verify each of the three equations in Eq.~\eqref{eq:recursion_bound}. First,
\begin{equation*}
    1 + p_{(1)} g(1) = 1 + p_{(1)} p_{(2)} \frac{\xi_{d-1}}{\xi_{d+1}} = \frac{\xi_{d+1} + p_{(1)} p_{(2)}\, \xi_{d-1}}{\xi_{d+1}} = \frac{\xi_d}{\xi_{d+1}} = g(0).
\end{equation*}
Second, for $1 \le s \le d - 1$,
\begin{equation*}
    p_{(2)} g(s - 1) + p_{(1)} g(s + 1) = p_{(2)}^{\,s} \frac{\xi_{d-s+1} + p_{(1)} p_{(2)}\, \xi_{d-s-1}}{\xi_{d+1}} = p_{(2)}^{\,s} \frac{\xi_{d-s}}{\xi_{d+1}} = g(s).
\end{equation*}
Third, since $\xi_0 = \xi_1$,
\begin{equation*}
    p_{(2)} g(d - 1) = p_{(2)}^{\,d} \frac{\xi_1}{\xi_{d+1}} = p_{(2)}^{\,d} \frac{\xi_0}{\xi_{d+1}} = g(d).
\end{equation*}
Combining Eq.~\eqref{eq:g_bound} with $g(0) \ge 1$ then yields
\begin{equation}\label{eq:xi_bound}
    \mathbb{E}[N' - N] \ge \sum_{s=0}^{d} g(s) - 1 \ge \sum_{s=1}^{d} p_{(2)}^{\,s} \frac{\xi_{d-s}}{\xi_{d+1}}.
\end{equation}
To obtain an explicit expression from Eq.~\eqref{eq:xi_bound}, we solve the recurrence in Eq.~\eqref{eq:auxiliary_recurrence} in closed form, treating the case $p_{(1)} + p_{(2)} = 1$ separately because  it admits a cleaner expression.

\xhdr{Case $\bm{p_{(1)} + p_{(2)} < 1}$} When more than two answers have positive probability, the characteristic polynomial associated with Eq.~\eqref{eq:auxiliary_recurrence} is $x \mapsto x^2 - x + p_{(1)} p_{(2)}$. Its roots are real, since $p_{(1)} + p_{(2)} < 1$ implies $p_{(1)} p_{(2)} < 1/4$, and are given by
\begin{equation*}
    \frac{1 \pm \sqrt{1 - 4 p_{(1)} p_{(2)}}}{2} = \sqrt{p_{(1)} p_{(2)}}\; e^{\mp \operatorname{arccosh}\left(\frac{1}{2\sqrt{p_{(1)} p_{(2)}}}\right)}.
\end{equation*}
The solution to Eq.~\eqref{eq:auxiliary_recurrence} is therefore
\begin{equation*}
    \xi_n = (p_{(1)} p_{(2)})^{n/2}\, \frac{\sinh\!\big((n + 1)\vartheta\big)}{\sinh \vartheta} = (p_{(1)} p_{(2)})^{n/2}\, U_n\!\left(\frac{1}{2\sqrt{p_{(1)} p_{(2)}}}\right), \quad \vartheta := \operatorname{arccosh}\!\left(\frac{1}{2\sqrt{p_{(1)} p_{(2)}}}\right),
\end{equation*}
where $U_n$ is the degree-$n$ Chebyshev polynomial of the second kind. Substituting into Eq.~\eqref{eq:xi_bound},
\begin{equation}\label{eq:bound_chebyshev}
    \mathbb{E}[N' - N] \ge \frac{1}{(p_{(1)} p_{(2)})^{(d+1)/2}\, U_{d+1}\!\left(\frac{1}{2\sqrt{p_{(1)} p_{(2)}}}\right)} \sum_{s=1}^{d} p_{(2)}^{\,s}\, (p_{(1)} p_{(2)})^{(d-s)/2}\, U_{d-s}\!\left(\frac{1}{2\sqrt{p_{(1)} p_{(2)}}}\right).
\end{equation}

\xhdr{Case $\bm{p_{(1)} + p_{(2)} = 1}$} When only two answers have positive probability, the characteristic polynomial factors as $x \mapsto x^2 - (p_{(1)} + p_{(2)}) x + p_{(1)} p_{(2)}$, with distinct roots $p_{(1)}$ and $p_{(2)}$ under the unique-mode assumption $p_{(1)} \neq p_{(2)}$. Hence $\xi_n = (p_{(1)}^{\,n+1} - p_{(2)}^{\,n+1}) / (p_{(1)} - p_{(2)})$, and Eq.~\eqref{eq:xi_bound} becomes:
\begin{align}\label{eq:bound_two_answers}
    \mathbb{E}[N' - N] \ge \sum_{s=1}^{d} p_{(2)}^{\,s}\, \frac{p_{(1)}^{\,d-s+1} - p_{(2)}^{\,d-s+1}}{p_{(1)}^{\,d+2} - p_{(2)}^{\,d+2}} = \frac{p_{(1)} p_{(2)}\,(p_{(1)}^{\,d} - p_{(2)}^{\,d}) - d\,(p_{(1)} - p_{(2)})\, p_{(2)}^{\,d+1}}{(p_{(1)} - p_{(2)})\,(p_{(1)}^{\,d+2} - p_{(2)}^{\,d+2})}.
\end{align}
This proves Proposition~\ref{prop:provable_gain_binary}.
\end{proof}

\xhdr{Tighter generalized bounds} The bounds in Eqs.~\eqref{eq:bound_chebyshev} and~\eqref{eq:bound_two_answers} count only paths whose answers lie in $\{a^N_{(1)}, a^N_{(2)}\}$, discarding the probability mass of every execution of Algorithm~\ref{alg:greedy_lookahead} that samples an answer in $\{a^N_{(3)}, \ldots, a^N_{(K)}\}$. Accounting for these paths yields a tighter lower bound. For each $i > 1$, define the initial margin between the leader and answer $a^N_{(i)}$ in the non-triggering prefix by
\begin{equation*}
    d_i := s^{N-1}_{(1)} - s^{N-1}_{(i)} = \sum_{t=1}^{N-1} \mathds{1}\{y_t = a^N_{(1)}\} - \sum_{t=1}^{N-1} \mathds{1}\{y_t = a^N_{(i)}\}.
\end{equation*}
Let $\mathcal{S} = \prod_{i=2}^{K} \{0, 1, \ldots, d_i\}$ denote the lattice of allowable relative displacements. We say that an execution of Algorithm~\ref{alg:greedy_lookahead} is \emph{safe} if, after each draw, the margin of $a^N_{(1)}$ over every other answer $a^N_{(i)}$ stays in $\{0, \ldots, d_i\}$. 
As before, every such prefix is non-triggering.  Grouping safe paths by their displacement vector $\mathbf{s} = (s_2, \ldots, s_K)$ gives
\begin{equation*}
    \mathbb{E}[N' - N] \ge \sum_{\mathbf{s} \in \mathcal{S}} g(\mathbf{s}) - 1,
\end{equation*}
where $g(\mathbf{s})$ is the total probability mass of safe paths ending at $\mathbf{s}$, with $g(\mathbf{0})$ including the empty path. These masses satisfy:
\begin{equation}\label{eq:general_recur}
    g(\mathbf{s}) = \mathds{1}\{\mathbf{s} = \mathbf{0}\} + p_{(1)}\, g(\mathbf{s} + \mathbf{1})\, \mathds{1}\{\mathbf{s} + \mathbf{1} \in \mathcal{S}\} + \sum_{i=2}^{K} p_{(i)}\, g(\mathbf{s} - \mathbf{e}_i)\, \mathds{1}\{\mathbf{s} - \mathbf{e}_i \in \mathcal{S}\},
\end{equation}
where $\mathbf{0} = (0, \ldots, 0)$, $\mathbf{1} = (1, \ldots, 1)$, and $\mathbf{e}_i$ the standard basis vector in $\mathbb{R}^{K-1}$ associated with answer $a^N_{(i)}$. Unlike the top-two bound, Eq.~\eqref{eq:general_recur} has no closed form in general, but it can be evaluated by solving the finite linear system over $\mathcal{S}$.

\clearpage
\newpage

\clearpage
\newpage
\section{Auditing Reported Sequences via Likelihoods}\label{appx:auditing_proofs}

\subsection{Proposition~\ref{prop:likelihood_correctness}}\label{appx:subsec:likelihood_correctness}

We now show that Algorithm~\ref{alg:likelihood} correctly counts the distinct sampling trajectories that could have produced an observed sequence $\y'$---that is, the distinct sequences $\y$ from which Algorithm~\ref{alg:greedy_lookahead} could have generated $\y'$---and therefore returns the likelihood $\mathbb{P}_{H_1}(\y')$. 

\begin{proof}[Proof of Proposition~\ref{prop:likelihood_correctness}]
Let $\y' = (y'_1, \ldots, y'_{N'})$ be a sequence compatible with $\tau$. For each $1 \le j \le N'$, let $\mathcal{T}_j(\y')$ denote the set of distinct sequences $\y$ such that, during the execution of Algorithm~\ref{alg:greedy_lookahead} on $\y$, the variable $\texttt{LastCompatible}$ can take the value $(y'_1,\dots,y'_j)$. We prove by induction that Algorithm~\ref{alg:likelihood} computes $\texttt{Counts}[j] = |\mathcal{T}_j(\y')|$ for every $1\leq j \leq N'$. It then follows that $\mathbb{P}_{H_1}(\y') = \texttt{Counts}[N'] \cdot p_{y'_{N'}} \prod_{i=1}^{N'} p_{y'_i}$.

\xhdr{Base case} For $j=1$, the reported prefix is $(y'_1)$. There is a single sequence $\y = (y'_1)$ for which the execution of Algorithm~\ref{alg:greedy_lookahead} generates $\texttt{LastCompatible} = (y'_1)$, so $\texttt{Counts}[1] = 1$.

\xhdr{Inductive step} Fix $2 \leq j \leq N'$ and suppose the claim holds for all shorter prefixes. Consider any $\y$ that produces $(y'_1, \ldots, y'_j)$ as $\texttt{LastCompatible}$ during the execution of Algorithm~\ref{alg:greedy_lookahead}. The final answer $y'_j$ must have reached position $j$ after some unique number $L \in \{ 0, \ldots, j-2\}$ of consecutive swaps. If $L=0$, then $y'_j$ was generated directly after the reported prefix of length $j-1$, without being reordered. If $L > 0$, then $y'_j$ was generated after the shorter prefix $(y'_1, \ldots, y'_{j-L-1})$ and subsequently permuted past $L$ generated answers. Such a trajectory is feasible only if, for every $k=1, \ldots, L$, the stopping rule $\tau$ triggers on the answer counts of $(y'_1, \ldots, y'_{j-k-1}, y'_j)$, which is precisely the condition checked by $\texttt{PossibleMultipleSwap}(j, L)$ in Algorithm~\ref{alg:likelihood}. 

For a fixed $L$, the portion of the trajectory preceding the generation of $y'_j$ can be any trajectory producing the prefix $(y'_1, \ldots, y'_{j-L-1})$. By the inductive hypothesis, there are $\texttt{Counts}[j-L-1]$ such trajectories. However, each complete trajectory has a unique value of $L$, so the sets of trajectories corresponding to the different values of $L$ are disjoint. Therefore, $\texttt{Counts}[j] = \sum_{L=0}^{j-2} \mathbbm{1} [\texttt{PossibleMultipleSwap}(j, L)] \cdot \texttt{Counts}[j-L-1]$, which is exactly the update performed in line~\ref{line:DP-update} of Algorithm~\ref{alg:likelihood}. Hence, $\texttt{Counts}[j] = |\mathcal{T}_j(\y')|$.

It remains to convert this trajectory count into a likelihood. Any trajectory that produces the observed sequence $\y'$ contains the $N'$ reported model draws, together with one final unreported draw equal to $y'_{N'}$ that causes Algorithm~\ref{alg:greedy_lookahead} to terminate.  Each such trajectory has probability $p_{y'_{N'}}\prod_{i=1}^{N'} p_{y'_i}$ under $H_1$.  Summing over the $\texttt{Counts}[N'] = |\mathcal{T}_{N'}(\y')|$ trajectories that produce $\y'$ gives $\mathbb{P}_{H_1}(\y')=\texttt{Counts}[N'] \cdot p_{y'_{N'}}\prod_{i=1}^{N'} p_{y'_i}$, as required.
\end{proof}

\subsection{Auditing Multiple Sequences}\label{appx:subsec:auditing_multiple_seqs}

Section~\ref{sec:auditing_individual_seqs} considers an auditor who observes a single reported sequence $\y'$. In practice, however, a provider serves a stream of queries, and an auditor may observe sequences reported in response to many of them---either because a single user submits multiple queries, or because the auditor pools transcripts across multiple users. The likelihood-ratio audit of Eq.~\eqref{eq:likelihood-test} extends naturally to this setting.

Suppose the auditor observes $M$ reported sequences  $\y_1, \y_2, \ldots, \y_M$, corresponding to queries $x_1, x_2, \ldots, x_M$. For each query $j$, define the likelihood ratio:
\begin{align*}
    e_j := \frac{\mathbb{P}_{H_1}(\y_j)}{\mathbb{P}_{H_0}(\y_j )}.
\end{align*}

Assuming that the model's generations are independent across queries conditional on the queries, the joint likelihood ratio of the $M$ reported sequences is:
\begin{align*}
    E_M &:= \prod_{j=1}^M e_j = \frac{\prod_{j=1}^{M} {\mathbb{P}}_{H_1}(\y_j \mid x_j)}{\prod_{j=1}^{M} {\mathbb{P}}_{H_0}(\y_j \mid x_j)}.
\end{align*}

Thus, evidence from multiple reported sequences can be accumulated by multiplying their individual likelihood ratios. Importantly, this product retains the e-value interpretation used in Section~\ref{sec:auditing_individual_seqs}. Under $H_0$, each $e_j$ has conditional expectation one given all previously observed queries and reported sequences. Hence, the process $(E_M)_{M \ge 0}$ is a nonnegative martingale under $H_0$ with $E_0 = 1$. An auditor can reject the null hypothesis whenever $E_M \ge \frac{1}{\alpha}$. By Ville's inequality, the probability that a faithful provider is ever falsely flagged is at most $\alpha$~\citep{ramdas2025evalues}:
\begin{align*}
    \mathbb{P}_{H_0} \left( \sup_{M \ge 1} E_M \ge \frac{1}{\alpha} \right) &\leq \alpha. 
\end{align*}
Thus, the audit remains valid as additional queries are observed, while evidence against a provider that repeatedly  uses Algorithm~\ref{alg:greedy_lookahead} can accumulate across queries. This imposes a stronger constraint a strategic unfaithful provider, who must now ensure that
\begin{align*}
    E_{j-1} \cdot \frac{\mathbb{P}_{H_1} (\y_j)}{\mathbb{P}_{H_0} (\y_j)} < \frac{1}{\alpha}
\end{align*}
for every query $j$ in order to avoid detection. To this end, the provider can modify Algorithm~\ref{alg:greedy_lookahead}  by replacing the single-sequence audit condition with:

\begin{equation*}
\texttt{AuditFlag}(\y, \alpha,\y_1,\dots \y_{j-1}) := \mathbbm{1} \left[
    E_{j-1} \cdot\, \texttt{Counts}[N'] \cdot \,p_{y_{N'}}
    \ge \frac{1}{\alpha}\right]
\end{equation*}
where $\y_1,\dots,\y_{j-1}$ are the $j-1$ previously reported sequences and $E_{j-1} = \prod_{\ell = 1}^{j-1} e_\ell$ is their cumulative likelihood ratio.
\clearpage
\newpage
\section{Computing Likelihoods in Linear Time}\label{appx:sec:likelihoods_linear}

To avoid detection by an auditor, a strategic provider must repeatedly evaluate the likelihood of its reported sequence to ensure that it remains below the audit's rejection threshold (Algorithm~\ref{alg:greedy_lookahead}). However, a direct implementation of Algorithm~\ref{alg:likelihood} is expensive. Each $\texttt{Counts}[j]$ is computed using nested loops over possible swap lengths, and each iteration re-evaluates the stopping rule $\tau$ on a prefix of length $O(N')$, resulting in $\Theta({N'}^3)$ running time overall.

 Algorithm~\ref{alg:likelihood_linear} computes the same likelihood in $O(N'K)$ time, linear in the reported sequence length $N'$ for fixed $K$.
 The speedup follows from the observation that, for a fixed endpoint $j$, if the answer $y'_j$ could have been generated earlier and pushed forward through $L$ consecutive swaps, then it could also have been pushed through any smaller number of swaps. Thus, the feasible swap lengths form a single contiguous interval. This allows us to express the sum defining $\texttt{Counts}[j]$ as a contiguous range sum and to precompute the endpoint of this interval in a single pass.

\xhdr{Collapsing the sum over swap lengths}  Algorithm~\ref{alg:likelihood} computes:
\begin{align*}
    \texttt{Counts}[j] := \sum_{L=0}^{j-2} \mathbbm{1}[\texttt{PossibleMultipleSwap}(j, L)] \cdot \texttt{Counts}[j-L-1],
\end{align*}
where evaluating $\texttt{PossibleMultipleSwap}(j, L)$ requires checking each of the $L$ intermediate swaps. More precisely,
\begin{align*}
    \texttt{PossibleMultipleSwap}(j, L) = \bigwedge_{k=1}^{L} \left[\,\tau\bigl(s_{(1)}, s_{(2)} \mid y'_1, \ldots, y'_{j-k-1}, y'_j\bigr)=1\,\right],
\end{align*}
where $\tau(s_{(1)}, s_{(2)} \mid \cdot )$ denotes $\tau$ applied to the two largest answer counts of the sequence supplied after the conditioning bar.  Once a conjunct fails for some $L$, it fails for all larger $L$. Therefore, for each $j$, the feasible swap lengths $L$ form a contiguous window $\{0, 1, \ldots, L_{\max}(j)\}$, and:
\begin{align*}
    \texttt{Counts}[j] = \sum_{L=0}^{L_{\max}(j)} \texttt{Counts}[j-L-1] = \sum_{i=j-1-L_{\max}(j)}^{j-1} \texttt{Counts}[i].
\end{align*}
We evaluate this range sum in constant time by maintaining the prefix sums $S[j] = \sum_{i=1}^{j} \texttt{Counts}[i]$, with $S[j] = 0$ for $j \le 0$. This reduces each entry to a single subtraction once $L_{\max}(j)$ is known:
\begin{align*}
    \texttt{Counts}[j] = S[j-1] - S\bigl[j-2-L_{\max}(j)\bigr].
\end{align*}
Thus, computing $\texttt{Counts}[j]$ requires only $O(1)$ time once the feasible swap interval has been identified.

\xhdr{Computing $\bm{L_{\max}(j)}$ efficiently} It remains to compute $L_{\max}(j)$ without explicitly checking every intermediate swap, \ie,  looping over $k$. Consider a prefix  $(y'_1, \ldots, y'_m)$ that does not trigger $\tau$. A swap involving $y'_j$ is feasible at this prefix precisely when appending $y'_j$ would trigger $\tau$. For each prefix length $m$, define:
\begin{align*}
    \sigma(m) &= \begin{cases}
        v & \text{if appending answer $v$ to the length-$m$ prefix triggers $\tau$,} \\
        \texttt{None} & \text{otherwise.}
    \end{cases}
\end{align*}
By Proposition~\ref{prop:top_two_margin}, whenever a triggering answer exists, it must be  ithe unique empirical leader of the prefix. Hence, the $k$-th swap condition for endpoint $j$ satisfies $\tau(s_{(1)}, s_{(2)} \mid y'_1, \ldots, y'_{j-k-1}, y'_j) = 1 \iff \sigma(j-k-1) = y'_j$. Therefore, $L_{\max}(j)$ is exactly the number of consecutive prefixes $j-2, j-3, \ldots$ for which appending $y'_j$ would trigger stopping.

To compute this quantity efficiently, let $\texttt{runs}[m]$ denote the length of the maximal consecutive run ending at $m$ for which $\sigma$ takes the same non-\texttt{None} value. Then, 
\begin{align*}
    L_{\max}(j) &= \begin{cases}
        \texttt{runs}[j-2] & \text{if $\sigma(j-2) = y'_j$,} \\
        0 & \text{otherwise.}
    \end{cases}
\end{align*}
Once $\sigma(m)$ and $\texttt{runs}[m]$ have been precomputed, $L_{\max}(j)$ can then be obtained in $O(1)$ time.

\xhdr{Overall running time} All required quantities can be computed in a single pass over the reported sequence. For each prefix length $m$, Algorithm~\ref{alg:likelihood_linear} 
updates the running answer counts, extracts the two most frequent answers and their counts,  tests whether appending the current empirical leader to the current prefix would trigger $\tau$, and updates the corresponding run length. 


Maintaining the answer counts takes constant time per newly observed answer, while identifying the two largest counts and the empirical leader requires $O(K)$ time per prefix. The preprocessing stage therefore requires $O(N'K)$ time. Once this preprocessing is complete, computing $L_{\max}(j)$, $\texttt{Counts}[j]$, and the corresponding prefix sum each requires $O(1)$ time per position. Algorithm~\ref{alg:likelihood_linear} therefore computes the likelihood in $O(N'K)$ time overall.

\begin{algorithm}[!ht]
\caption{It computes the likelihood of an observed sequence under $H_1$ in linear time}
\label{alg:likelihood_linear}
\footnotesize
\begin{algorithmic}[1]
\State \textbf{Input:} Reported compatible sequence $\y' = (y'_1,\dots,y'_{N'})$, stopping rule $\tau$, answer probabilities $p_1,\dots,p_K$.
\State \textbf{Output:} Likelihood $\P_{H_1}(\y')$ under Algorithm~\ref{alg:greedy_lookahead}
\vspace{1mm}
\State $\texttt{Counts} \gets \underbrace{(1,0,\dots,0)}_{N' \text{ elements}}$
\State $S \gets \underbrace{(1,0,\dots,0)}_{N' \text{ elements}}$ \Comment{Prefix sums $S[j]=\sum_{t\le j}\texttt{Counts}[t]$}
\State $\sigma \gets \underbrace{(\texttt{None}, \ldots, \texttt{None})}_{N' \text{ elements}}$ \Comment{$\sigma[j]$ is the answer whose appending at position $j$ triggers $\tau$, else \texttt{None}}
\State $\texttt{runs} \gets \underbrace{(0, \ldots, 0)}_{N' \text{ elements}}$ \Comment{$\texttt{runs}[j]$ is the length of the maximal equal-$\sigma$ run ending at $j$}
\State $s \gets (0, \ldots, 0)$ \Comment{$K$ elements tracking answer counts}
\vspace{1mm}
\State \textcolor{mycommentcolor}{/** Single pass over each prefix to build $\sigma$ and \texttt{runs}: $O(N'K)$ **/}
\For{$j=1,\dots, N'-1$}
    \State $s[y'_{j}] \gets s[y'_{j}] + 1$
    \State $v^\star \gets \arg\max_v s[v]$; \quad $s_{(1)}, s_{(2)} \gets$ two largest entries of $s$
    \If{$s_{(1)} > s_{(2)}$ \textbf{and} $\tau(s_{(1)}, s_{(2)}) = 0$ \textbf{and} $\tau(s_{(1)}+1, s_{(2)}) = 1$}
        \State $\sigma[j] \gets v^\star$ \Comment{Appending $v^\star$ to the length-$j$ prefix triggers $\tau$}
    \EndIf
    \If{$\sigma[j] \neq \texttt{None}$ \textbf{ and } $\sigma[j]=\sigma[j-1]$}
        \State $\texttt{runs}[j] \gets \texttt{runs}[j-1] + 1$ \Comment{Extend the run of prefixes with the same triggering answer}
    \Else
        \State $\texttt{runs}[j] \gets \mathbbm{1}[\sigma[j] \neq \texttt{None}]$
    \EndIf
\EndFor
\vspace{1mm}
\State \textcolor{mycommentcolor}{/** Build $\texttt{Counts}$ using that $\texttt{Counts}[j] = \sum_{L=0}^{L_{\max}} \texttt{Counts}[j{-}L{-}1]$ is a contiguous window: $O(N')$ **/}
\For{$j=2, \ldots, N'$}
    \If{$(j-2) \ge 1$ \textbf{and} $\sigma[j-2] = y'_j$}
        \State $L_{\max} \gets \texttt{runs}[j-2]$
    \Else
        \State $L_{\max} \gets 0$
    \EndIf
    \State $\texttt{Counts}[j] \gets S[j-1] - S[j-2-L_{\max}]$
    \State $S[j] \gets S[j-1] + \texttt{Counts}[j]$
\EndFor
\vspace{1mm}
\State \Return $\texttt{Counts}[N']\cdot p_{y'_{N'}} \cdot \prod_{i=1}^{N'} p_{y'_i}$
\end{algorithmic}
\end{algorithm}
\clearpage
\newpage
\section{Additional Experimental Details}\label{appx:sec:generation-details}
Our experiments in Section~\ref{sec:experiments} rely on the dataset released by~\citet{velasco2026ttcgames}, which contains LLM outputs for multiple test-time compute methods across several benchmark datasets and is publicly available on Hugging Face.\footnote{\url{https://huggingface.co/datasets/Human-Centric-Machine-Learning/strategic-ttc-data}} Specifically, we use model outputs generated under self-consistency and best-of-$N$ across three datasets. We refer the reader to Appendix~D of~\citet{velasco2026ttcgames} for a detailed description of the data-generation procedure.

\xhdr{Datasets}
We use LLM outputs from three datasets: \texttt{GSM8K}, \texttt{GPQA}, and \texttt{AIME}. \texttt{GSM8K}~\citep{cobbe2021training} is a mathematics benchmark consisting of grade-school-level problems, \texttt{GPQA}~\citep{rein2023gpqagraduatelevelgoogleproofqa} is a multiple-choice STEM question-answering benchmark, and \texttt{AIME}~\citep{aime25} is a mathematical reasoning benchmark based on problems from the American Invitational Mathematics Examination. All three datasets provide verifiable ground-truth answers for each query and are publicly available on  Hugging Face.\footnote{\url{https://huggingface.co/datasets/openai/gsm8k}}\textsuperscript{,}\footnote{\url{https://huggingface.co/datasets/Idavidrein/gpqa}}\textsuperscript{,}\footnote{\url{https://huggingface.co/datasets/Maxwell-Jia/AIME_2024}}

\xhdr{Models} 
From the \texttt{Llama} family, we consider \texttt{\seqsplit{Llama-3-8B-Instruct}}, \texttt{\seqsplit{Llama-3.1-8B-Instruct}}, \texttt{\seqsplit{Llama-3.2-1B-Instruct}}, and \texttt{Llama-3.2-3B-Instruct}. From the \texttt{Qwen} family, we consider \texttt{\seqsplit{Qwen-2-0.5B-Instruct}}, \texttt{\seqsplit{Qwen-2-1.5B-Instruct}}, \texttt{\seqsplit{Qwen-2-7B-Instruct}}, \texttt{\seqsplit{Qwen-2.5-3B-Instruct}}, and \texttt{\seqsplit{Qwen-2.5-7B-Instruct}}. We additionally consider three reasoning models distilled from \texttt{DeepSeek-R1}: \texttt{\seqsplit{DeepSeek-R1-Distill-Llama-8B}}, \texttt{\seqsplit{DeepSeek-R1-Distill-Qwen-1.5B}}, and \texttt{\seqsplit{DeepSeek-R1-Distill-Qwen-7B}}. 
For the best-of-$N$ experiments in Appendix~\ref{appx:subsec:bon_results}, we use the \texttt{ArmoRM-Llama3-8B-v0.1} reward model to score the outputs generated by these models.
All the models are publicly available through Hugging Face.

\xhdr{Generation details}
The model outputs were generated using the temperature settings recommended in their official Hugging Face model cards: temperature $0.6$ for the \texttt{Llama} family and temperature $0.7$ for the \texttt{Qwen} family. Neither top-$p$ nor top-$k$ sampling was used. For each query, the dataset contains $128$ outputs for non-reasoning models and $32$ outputs for reasoning models. See~\citet{velasco2026ttcgames} for details regarding prompt formatting.

\xhdr{Licenses}
The \texttt{Llama-3} models and \texttt{ArmoRM-Llama3-8B-v0.1} are licensed under the LLAMA 3 COMMUNITY LICENSE AGREEMENT.\footnote{\url{https://www.llama.com/llama3/license/}}
The \texttt{Llama-3.1} models are licensed under the LLAMA 3.1 COMMUNITY LICENSE AGREEMENT.\footnote{\url{https://www.llama.com/llama3_1/license/}}
The \texttt{Llama-3.2} models are licensed under the LLAMA 3.2 COMMUNITY LICENSE AGREEMENT.\footnote{\url{https://www.llama.com/llama3_2/license/}}
The \texttt{Qwen} models are licensed under the Tongyi Qianwen LICENSE AGREEMENT.\footnote{\url{https://github.com/QwenLM/Qwen/blob/main/Tongyi\%20Qianwen\%20LICENSE\%20AGREEMENT/}}
The \texttt{DeepSeek-R1-Distill-Llama-8B}, \texttt{DeepSeek-R1-Distill-Qwen-1.5B}, and \texttt{DeepSeek-R1-Distill-Qwen-7B} models are licensed under the  MIT License.
 \texttt{GPQA} and \texttt{AIME}  are licensed under Creative Commons Attribution 4.0, and \texttt{GSM8K} is licensed under MIT License.  The \texttt{strategic-ttc-data} dataset used in our experiments is also licensed under the  MIT License.

\clearpage
\newpage

\section{Additional Experimental Results}\label{appx:sec:additional_results}

In this section, we present additional experimental results that complement those in Section~\ref{sec:experiments}.

\subsection{Experimental Results on Synthetic Data}\label{app:synthetic-data}

We first empirically examine how the answer-probability distribution
$(p_1,\ldots,p_K)$ affects the expected number of additional answers
$\mathbb{E}[N'-N]$ generated by Algorithm~\ref{alg:greedy_lookahead}.

\xhdr{Experimental setup}\label{appx:subsec:synthetic_generation}
Motivated by the lower bound in Proposition~\ref{prop:provable_gain_binary}, and as well as prior theoretical results showing that the sample complexity of mode identification depends on the probability between the two most likely answers~\citep{feng2026optimal}, we construct answer-probability distributions $(p_1, \dots, p_K)$ with mode $a_1$  by varying the gap $\Delta = p_1 - p_2$ and spreading the remaining probability mass uniformly across $a_2, \dots, a_K$:
\begin{align*}
p_1 = \frac{1 + (K-1)\Delta}{K}, \qquad p_2 = \ldots = p_K = \frac{1 - \Delta}{K}.
\end{align*}
For $K=2$, we consider $20$ equally spaced values of $\Delta \in [0.1, 0.9]$. For each resulting answer-probability distribution, we sample $1,000$ faithfully stopped sequences $\y = (y_1, \dots, y_N)$ under both the PPR-1v1 stopping rule~\citep{pac_mode_estimation} (Example~\ref{ex:ppr-1v1}) and the ASC stopping rule~\citep{aggarwal2023samplestepbystep} (Example~\ref{ex:ASC}).
For ASC, we set $\gamma = 0.95$ following \citet{aggarwal2023samplestepbystep}; for PPR-1v1, we set $\delta = 0.1$.
Starting from each faithfully stopped sequence $\y$, we run Algorithm~\ref{alg:greedy_lookahead} with $\alpha = 0$, corresponding to no audit, to obtain  $\y' = (y'_1, \dots, y'_{N'})$.

\xhdr{Results}
Figure~\ref{fig:lower_bound_provider_gain} shows the empirical mean number of additional answers, $\E[N' - N]$, as a function of the top-two probability gap $p_{(1)} - p_{(2)}$.
Consistent with Proposition~\ref{prop:provable_gain_binary}, the number of additional answers increases as $p_{(1)} \to p_{(2)}$, exceeding the lower bound by a substantial margin.
%
This shows that Algorithm~\ref{alg:greedy_lookahead}  has greater opportunity to extend a sequence when the LLM’s answer-probability distribution assigns similar probabilities to the two most likely answers.

\begin{figure}[th!]
\includegraphics[width=\textwidth]{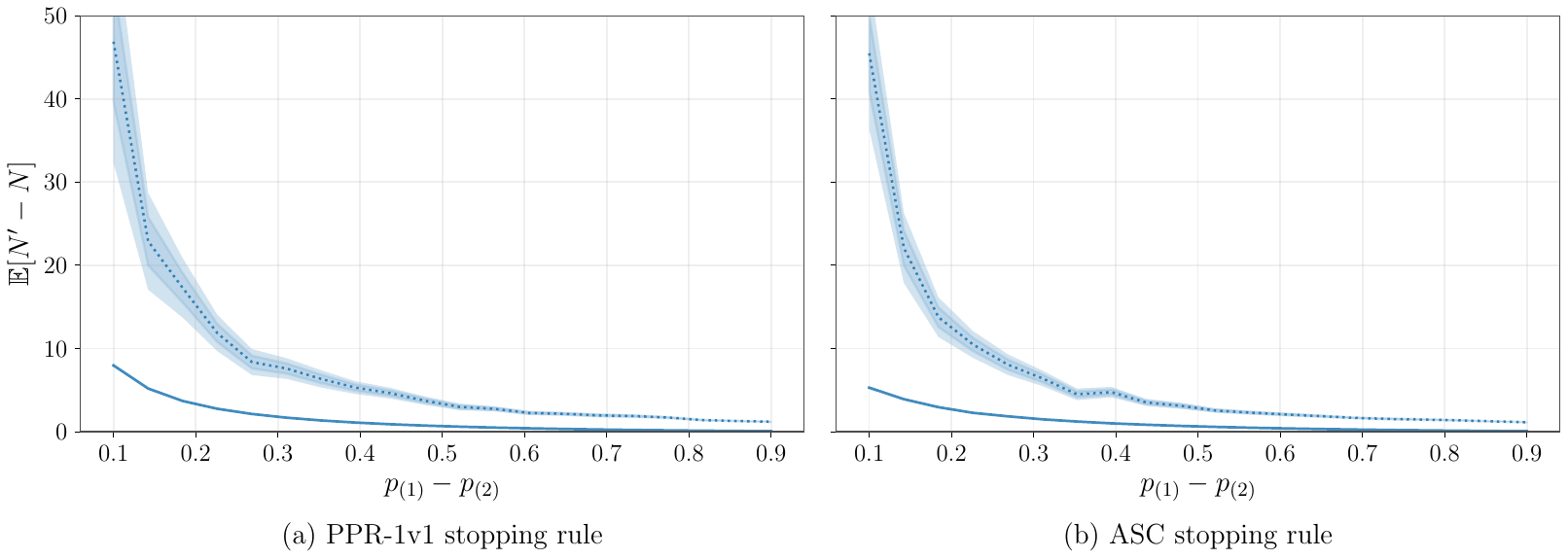}
\caption{\textbf{Additional answers generated by Algorithm~\ref{alg:greedy_lookahead} on synthetic distributions.}
For $K=2$, we report the empirical mean number of additional answers $N' - N$ generated by Algorithm~\ref{alg:greedy_lookahead} (dotted lines), together with the theoretical lower bound from Proposition~\ref{prop:provable_gain_binary} (solid lines), as a function of the top-two probability gap $p_{(1)} - p_{(2)}$.
The shaded regions indicate $\pm 2$ standard errors of the average.
}
\label{fig:lower_bound_provider_gain}
\end{figure}

\subsection{Additional Cost to Users}\label{appx:subsec:additional_llm_results}

 Section~\ref{sec:experiments} measures overcharging in terms of the number of additional reasoning paths generated by Algorithm~\ref{alg:greedy_lookahead}. In practice, however, users are typically billed by the number of tokens the model generates, and reasoning paths can vary substantially in length. This distinction is particularly important for reasoning models, whose outputs may contain long reasoning traces and incur substantially greater token costs per additional sample. As a consequence, even a modest number of additional answers generated by Algorithm~\ref{alg:greedy_lookahead} can result in a substantial price increase for the user. To quantify this effect, Figure~\ref{fig:monetary_overcharge} converts the token counts of the additional reasoning paths generated by Algorithm~\ref{alg:greedy_lookahead} into monetary units and reports the resulting billing overcharge, in U.S. cents, across models and datasets.

\begin{figure}[!ht]
\includegraphics[width=\textwidth]{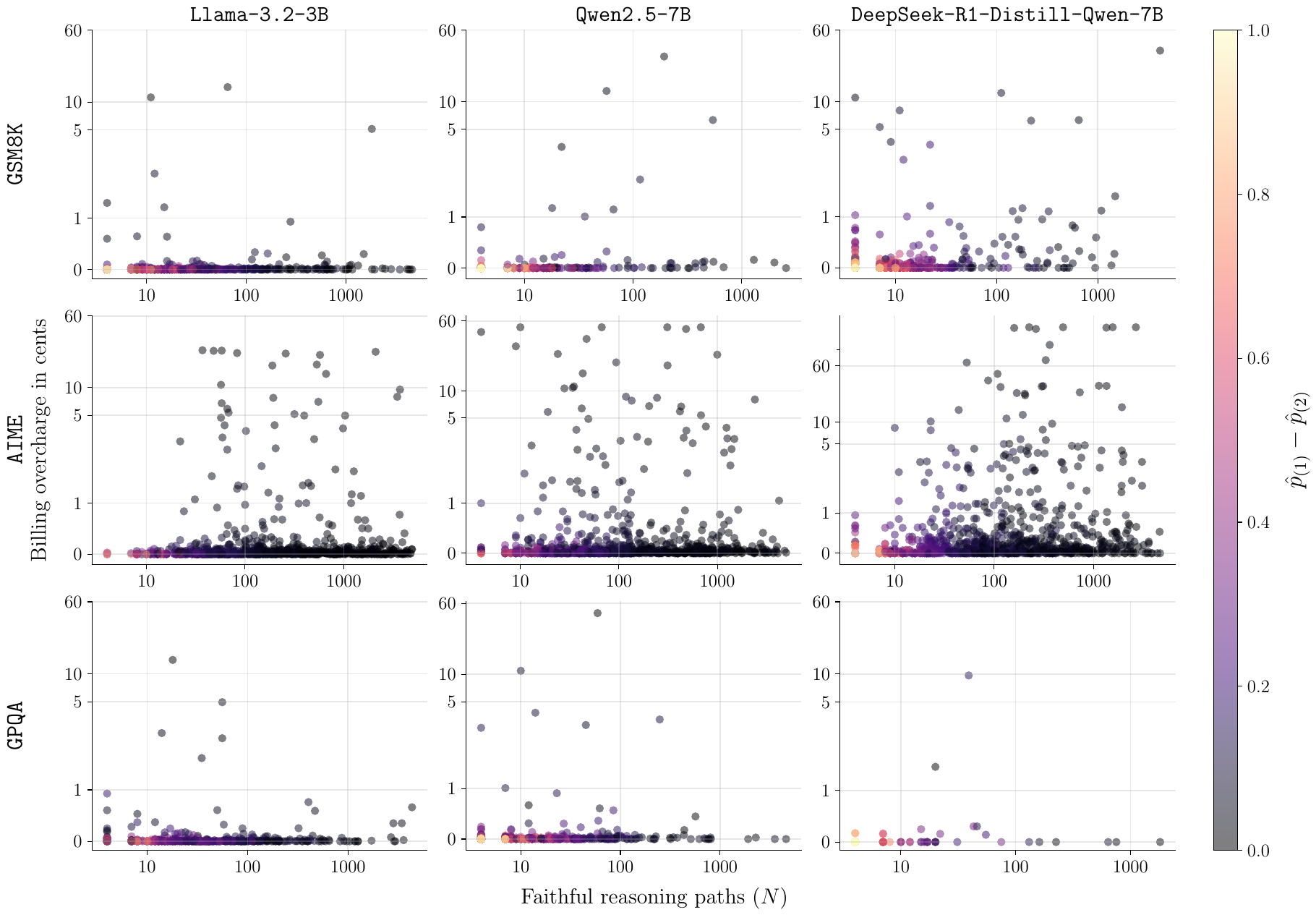}
\caption{\textbf{Price increase from additional answers.} For each query, we plot the additional price, measured in U.S. cents, charged for the $N'-N$ additional reasoning paths generated by Algorithm~\ref{alg:greedy_lookahead}, as a function of the faithful stopping time $N$. Color indicates the estimated top-two probability gap $\hat p_{(1)}-\hat p_{(2)}$,  with smaller gaps corresponding to more difficult queries. We set the audit threshold to $\alpha=0.1$ and use the ASC stopping rule with $\gamma=0.95$.}
\label{fig:monetary_overcharge}
\end{figure}

To compute the monetary overcharge in Figure~\ref{fig:monetary_overcharge}, we convert the observed token counts for the additional answers using hosted inference prices per 1M output tokens from the same platform: $\$0.10$/1M tokens\footnote{\href{https://cloudprice.net/models/meta-llama-3-2-3b-instruct}{https://cloudprice.net/models/meta-llama-3-2-3b-instruct}} for \texttt{Llama-3.2-3B}, $\$0.20$/1M\footnote{\href{https://cloudprice.net/models/alibaba-qwen2-5-7b}{https://cloudprice.net/models/alibaba-qwen2-5-7b}} for \texttt{Qwen2.5-7B}, and $\$0.20$/1M\footnote{\href{https://cloudprice.net/models/deepseek-r1-distill-qwen-7b}{https://cloudprice.net/models/deepseek-r1-distill-qwen-7b}} for \texttt{DeepSeek-R1-Distill-Qwen-7B}. 


\subsection{Experiments with Inadmissible Stopping Rules}\label{appx:subsec:inadmissible_stopping_experiments}

An admissible stopping rule (Definition~\ref{def:admissible_count_based_stopping}) is required only for the theoretical lower bound in Proposition~\ref{prop:provable_gain_binary}. However, the compatibility guarantee in Proposition~\ref{prop:one_step_correctness} does not require the stopping rule to be admissible. Algorithm~\ref{alg:greedy_lookahead} can be applied to construct a longer answer sequence for any stopping rule while still ensuring that the longer sequence is  compatible with the rule. To evaluate whether Algorithm~\ref{alg:greedy_lookahead} remains effective outside the admissible class, we consider Early-Stopping Self-Consistency (ESC)~\citep{li2024escape} with a window size of $5$, which is an inadmissible stopping rule (see Appendix~\ref{app:additional_rules}). Figure~\ref{fig:esc_excess_samples_by_difficulty} shows that Algorithm~\ref{alg:greedy_lookahead}  still generates a substantial number of additional reasoning paths under ESC across models and datasets.

\begin{figure}[!ht]
\includegraphics[width=\textwidth]{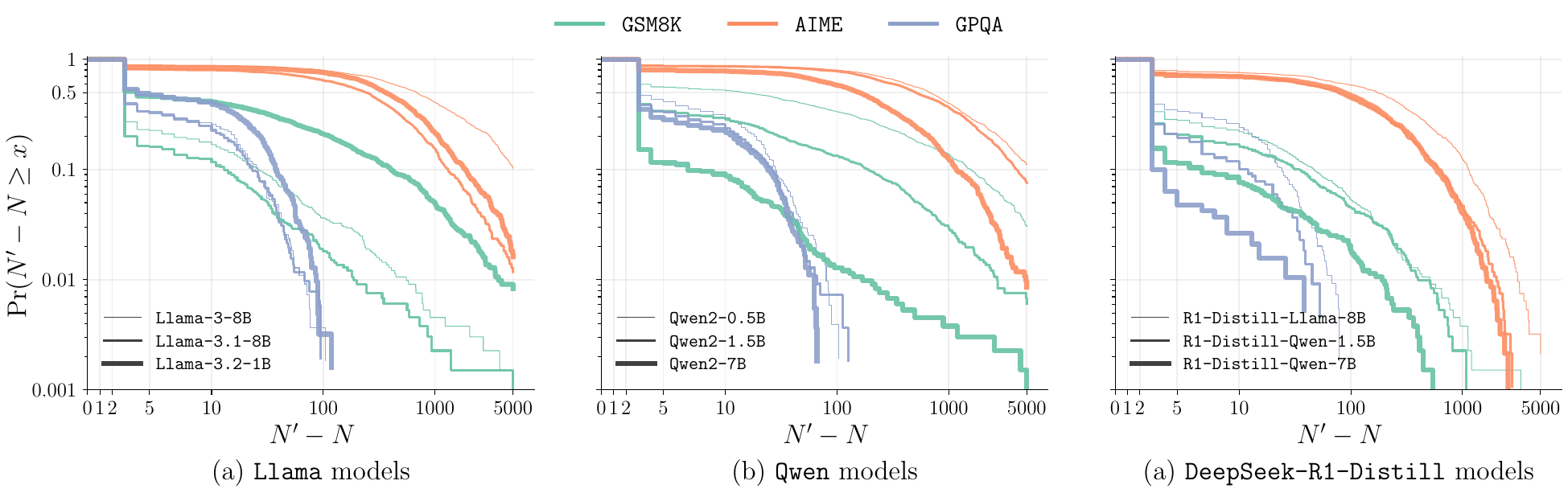}
\caption{\textbf{Distributions of additional reasoning paths under ESC.}  For queries from \texttt{GSM8K}, \texttt{AIME}, and \texttt{GPQA}, each panel shows the complementary cumulative distribution $\Pr(N'-N \ge x)$ of the number of additional reasoning paths generated by Algorithm~\ref{alg:greedy_lookahead} beyond the faithful stopping time. Colors denote datasets, and line width distinguishes model variants within each family.  We use ESC stopping rule  with window size $5$ as the stopping rule and set $\alpha = 0.1$.}
\label{fig:esc_excess_samples_by_difficulty}
\end{figure}

Figure~\ref {fig:esc_auditing} examines the effect of the audit threshold under ESC. Across models and datasets, imposing the audit reduces the provider's ability to extend some sequences, but  substantial overcharging remains at the 90th percentile even for $\alpha=0.1$.
\begin{figure}[!ht]
\centering
\includegraphics[width=\textwidth]{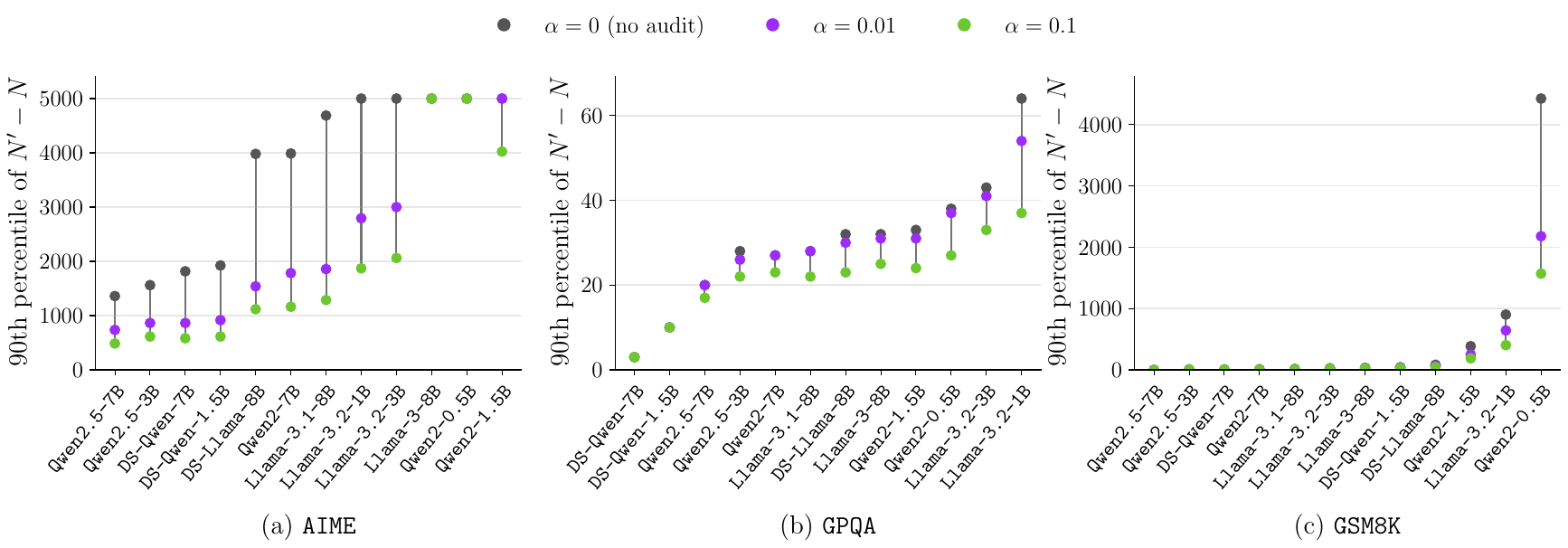}
\caption{\textbf{Influence of the audit threshold under ESC.}
For  queries from \texttt{GSM8K}, \texttt{AIME}, and \texttt{GPQA}, we report the $90$th percentile of the number of additional answers $N'-N$ generated by Algorithm~\ref{alg:greedy_lookahead} using ESC for different audit thresholds $\alpha$. 
The setting $\alpha=0$ corresponds to no audit. 
\texttt{DS} abbreviates \texttt{DeepSeek-R1-Distill}.}
\label{fig:esc_auditing}
\end{figure}

\clearpage
\newpage

\subsection{Adaptive Best-of-N Experiments}\label{appx:subsec:bon_results}

We next show that the conclusions of Section~\ref{sec:experiments} extend beyond self-consistency to adaptive best-of-$N$. We again use the dataset of~\citet{velasco2026ttcgames}, which includes a reward score for each generated answer computed using \texttt{RLHFlow/ArmoRM-Llama3-8B-v0.1}. We consider a simple threshold-based stopping rule that instructs the provider to stop  as soon as it observes a reasoning path with reward at least $r^* = 1.25 \cdot \operatorname{median}(R_1, \ldots, R_{N_{\mathrm{cal}}})$, where $R_1,\ldots, R_{N_{\mathrm{cal}}}$ denote reward scores from a calibration sample. In our experiments, the calibration sample consists of all pre-generated answers ($N_{\mathrm{cal}}=128$ for instruct models and $N_{\mathrm{cal}}=32$ for reasoning models).

Starting from the faithfully stopped sequence, we adapt the logic of Algorithm~\ref{alg:greedy_lookahead} to the reward-based stopping rule, where whenever a reasoning path crosses the reward threshold, the provider generates an additional path and attempts to reorder it so that the reported sequence remains compatible with the stopping rule. Figure~\ref{fig:threshold_bon_distribution} shows that this strategy can generate additional reasoning paths across model families and datasets, suggesting that the opportunity for strategic continuation is not unique to count-based self-consistency.

\begin{figure}[!ht]
\includegraphics[width=\textwidth]{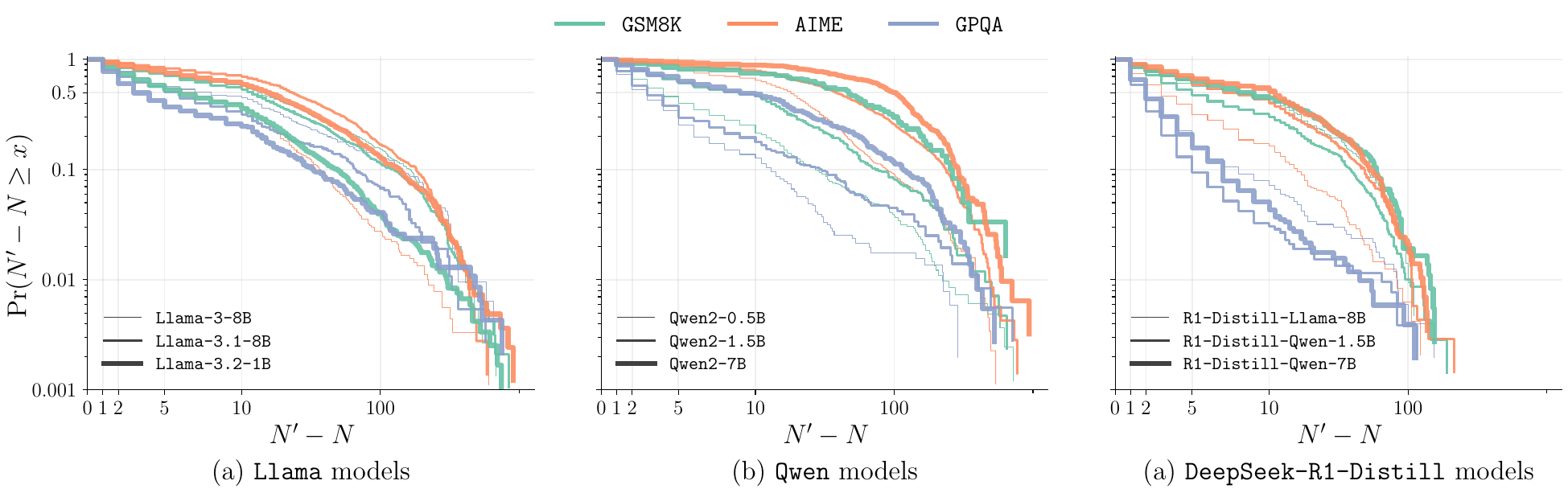}
\caption{\textbf{Distributions of additional reasoning paths under adaptive best-of-$\bm{N}$.}  For queries from \texttt{GSM8K}, \texttt{AIME}, and \texttt{GPQA}, each panel shows the complementary cumulative distribution $\Pr(N'-N \ge x)$ of additional reasoning paths generated beyond the faithful stopping time by adapting Algorithm~\ref{alg:greedy_lookahead} to a reward-threshold stopping rule. Colors denote datasets, and line width distinguishes model variants within each family. These experiments set $\alpha = 0.1$.}
\label{fig:threshold_bon_distribution}
\end{figure}



\xhdr{Auditing adaptive best-of-$\bm{N}$}
Adaptive best-of-$N$ requires a different auditing procedure from the one developed for self-consistency in Section~\ref{sec:auditing_individual_seqs}. While Eq.~\eqref{eq:likelihood-test} evaluates answer sequences under an iid categorical model,  best-of-$N$ stops according to continuous reward scores. An audit must  therefore evaluate whether the observed reward trajectory, including the timing of threshold crossing,  is plausible under faithful execution. A sufficiently informative audit must  distinguish between reward magnitudes and the ordering of reward scores, rather than reducing each reasoning path to whether its reward exceeds $r^*$. This parallels the  likelihood-ratio audit in Section~\ref{sec:auditing_individual_seqs}, which uses the full sequence of answer identities to accumulate evidence against a provider  who reorders answers. Developing analogous audits for adaptive best-of-$N$---for instance, by testing whether the observed reward sequence is consistent with iid draws from the model's reward distribution---is an important direction for future work.


\end{document}